\documentclass{applemlr}

\usepackage{amsmath}
\usepackage{enumerate}
\usepackage{algorithm}
\usepackage{algpseudocode}
\usepackage{amsfonts}
\usepackage{amsthm}
\usepackage{cleveref}
\usepackage{diagbox}
\usepackage{colortbl}
\usepackage{amssymb}
\usepackage{xspace}
\usepackage{wrapfig}
\usepackage{adjustbox}
\usepackage{tabularx}
\usepackage{booktabs}
\usepackage{mathtools}
\usepackage{tikz}
\usepackage{enumitem}
\usepackage{silence}
\usepackage{dsfont}
\usepackage[table]{xcolor}
\usepackage[dvipsnames]{xcolor}
\usepackage{multirow}
\usepackage{makecell}
\usepackage{xfakebold}

\usepackage{amsmath,amsfonts,bm}

\def\eqref#1{equation~\ref{#1}}

\def\1{\bm{1}}

\DeclareMathAlphabet{\mathsfit}{\encodingdefault}{\sfdefault}{m}{sl}
\SetMathAlphabet{\mathsfit}{bold}{\encodingdefault}{\sfdefault}{bx}{n}

\definecolor{textgray}{HTML}{6E6E73}
\usetikzlibrary{positioning, calc}
\usetikzlibrary{decorations.pathmorphing}

\makeatletter
\patchcmd{\wrong@fontshape}{\@gobbletwo}{}{}{}
\makeatother
\numberwithin{equation}{section}
\makeatletter
\AtBeginDocument{
  \urlstyle{sf}
  
}
\makeatother

\definecolor{light}{RGB}{125, 125, 125}
\crefname{tcb@cnt@pbox}{code}{code}
\Crefname{tcb@cnt@pbox}{Code}{Code}
\crefname{assumption}{assumption}{assumption}
\Crefname{assumption}{Assumption}{Assumptions}

\newtcolorbox[auto counter]{pbox}[2][]{
  colback=white,
  title=Code~\thetcbcounter: #2,
  #1,fonttitle=\sffamily,
  fontupper=\sffamily,
  arc=2pt,
  colframe=bgcolor,
  coltitle=fgcolor,
  colbacktitle=bgcolor,
  toptitle=0.25cm,
  bottomtitle=0.125cm
}

\makeatletter
\newcommand\applefootnote[1]{%
  \begingroup
  \renewcommand\thefootnote{}%
  \renewcommand\@makefntext[1]{\noindent##1}%
  \footnote{#1}%
  \addtocounter{footnote}{-1}%
  \endgroup
}
\makeatother

\definecolor{cverbbg}{gray}{0.90}

\newtheorem{lemma}{Lemma}

\title{Structuring MoE Expert Selection for Agentic Reinforcement Learning}

\author[1,2]{Bolian Li}
\author[1]{Ting-Yao Hu}
\author[1]{Cheng-Yu Hsieh}
\author[1]{Sanjoy Chowdhury}
\author[1]{Oncel Tuzel}
\author[1]{Raviteja Vemulapalli}

\affiliation[1]{Apple}
\affiliation[2]{Purdue University}

\abstract{
    Long-horizon LLM agents are frequently implemented using sparse mixture-of-experts (MoE) models, yet the co-design of agentic behavior and MoE structures remains underexplored. In this work, we comprehensively study the connections between agentic post-training and MoE expert selection. In off-the-shelf MoE models, we observe expert selection exhibits a specialized structure that naturally aligns with agentic trajectories. Specifically, expert routing overlaps more between turns where the agent performs semantically similar operations (e.g., READ, UPDATE) than between turns with differing operations. However, standard RL algorithms ignore this specialization, allowing the MoE routing to go uncontrolled during training, which empirically limit task performance and inference efficiency. To address this, we introduce a hierarchical routing control framework for agentic tasks. We explicitly encourage turn-level expert selections to align with agentic operations while regularizing token-level expert selections to maintain local consistency. To resolve stability issues that arise during post-training with the proposed methods, we further introduce an entropy-gated control mechanism. Overall, our routing control framework achieves over 10-point improvements in success rate on all evaluated benchmarks. These results demonstrate that agentic trajectory structure provides an effective signal for optimizing MoE capacity during RL post-training.
}

\metadata[Correspondence]{\sffamily Bolian Li: \url{bli67@apple.com}
}
\date{\sffamily\today}

\begin{document}

\maketitle

\section{Introduction}
Recent progress in long-horizon interactive agents has been driven by two complementary advances. First, external scaffolding built around LLMs enables complex agentic behaviors, such as multi-step reasoning and making external tool calls~\citep{yao2023react,trivedi2024appworld, shepard2026automationbench}. Second, mixture-of-experts (MoE) architectures allow model capacity to scale efficiently by activating only a fraction of parameters per token~\citep{liu2024deepseek}. This sparse computation is particularly useful for agentic workloads, which generate long trajectories through repeated interactions with environments. Although long-horizon agents are frequently implemented using MoE architectures~\citep{liu2025deepseek, team2026kimi, zeng2026glm}, the co-design of agentic frameworks and MoE routing remains underexplored. While current agentic post-training algorithms optimize the actor policy for downstream performance, they fail to account for how these updates affect expert selection in MoE models.

Agentic trajectories provide a natural structure for understanding this co-design. A trajectory repeatedly alternates between a \emph{thinking} field, a \emph{tool-use} field, and \emph{environment feedback}. We encapsulate the three fields as one \emph{turn} in an agentic trajectory. Semantically, a turn performs an identifiable operation, such as making tool calls that read, create, or update an object. We observe in Fig.~\ref{fig:method_pre_analysis} that the routers of MoE models already exhibit an operation-grouped specialization structure: adjacent tokens within one field tend to reuse experts, while turns with semantically similar operations tend to share similar expert distributions. In contrast, such specialization is not obvious for other grouping criteria like tool names. However, this structure is overlooked in standard MoE-agnostic post-training. Standard RL does not encourage consistency between adjacent tokens or operation-grouped specialization, which limits both task performance and efficiency of MoE-based long-horizon agents, as demonstrated in Fig.~\ref{fig:effect_overview}.

In this work, we hypothesize that a well-structured expert selection mechanism is key to improving agentic task performance. Specifically, aligning expert routing with the semantic structure of the trajectory---by maintaining consistency within a \textit{thinking} or \textit{tool-use} field, and specializing experts for distinct operations---allows the model to utilize its expert capacity more effectively. To explicitly optimize this structure, we introduce a hierarchical routing control framework for expert selection during agentic RL. At the turn level, we maximize the mutual information (MI) between the router distribution, averaged across all tokens within a respective \textit{thinking} or \textit{tool-use} field in the turn, and the corresponding operation label for that turn. This encourages turns with the same semantic operation to share expert sets while separating the experts used for different operations. At the token level, we regularize adjacent expert selections to be locally consistent. The control is deliberately selective: it reinforces the previous token's expert set only when two adjacent routes are already similar, preserving necessary routing changes at semantic and field boundaries. Because the introduced auxiliary router control can destabilize RL training, we further introduce an entropy-gated mechanism that disables the routing-control gradients when the policy entropy encounters high fluctuations. This framework preserves the token-wise top-$k$ router, and requires no architectural modification to existing MoE models. It can as well be combined with different policy-gradient algorithms such as PPO~\citep{schulman2017proximal}, GRPO~\citep{shao2024deepseekmath}, LOOP~\citep{chen2025reinforcement}, and GiGPO~\citep{feng2026group}. Through evaluations on AppWorld~\citep{trivedi2024appworld} and AutomationBench~\citep{shepard2026automationbench}, we show that the proposed routing control framework improves task performance when combined with various underlying RL algorithms, stabilizes MoE RL training, and increases inference efficiency. Overall, these results show that agentic trajectory structure provides a useful signal for organizing MoE expert selection during post-training.

We summarize the main contributions as follows:
\begin{itemize}
    \item We systematically study the connection between agentic behavior and MoE expert selection. We identify operation-grouped expert specialization and temporal routing consistency as key factors to achieve better task performance and inference efficiency. These routing structures are overlooked by standard RL training algorithms.

    \item We propose a hierarchical routing control framework that combines operation-aware turn-level specialization with outlier-tolerant token-level consistency. We also identify the stability risk of directly controlling MoE routers and introduce an entropy-gated control mechanism. The method supports the standard top-$k$ MoE router and is compatible with different RL algorithms.

    \item We conduct comprehensive experiments on AppWorld and AutomationBench. Our routing control framework consistently improves agentic RL with improved routing structures, over 10-point task performance improvement, long-term training stability, and better inference efficiency.
\end{itemize}

\begin{figure}[t]
    \centering
    \includegraphics[width=\linewidth]{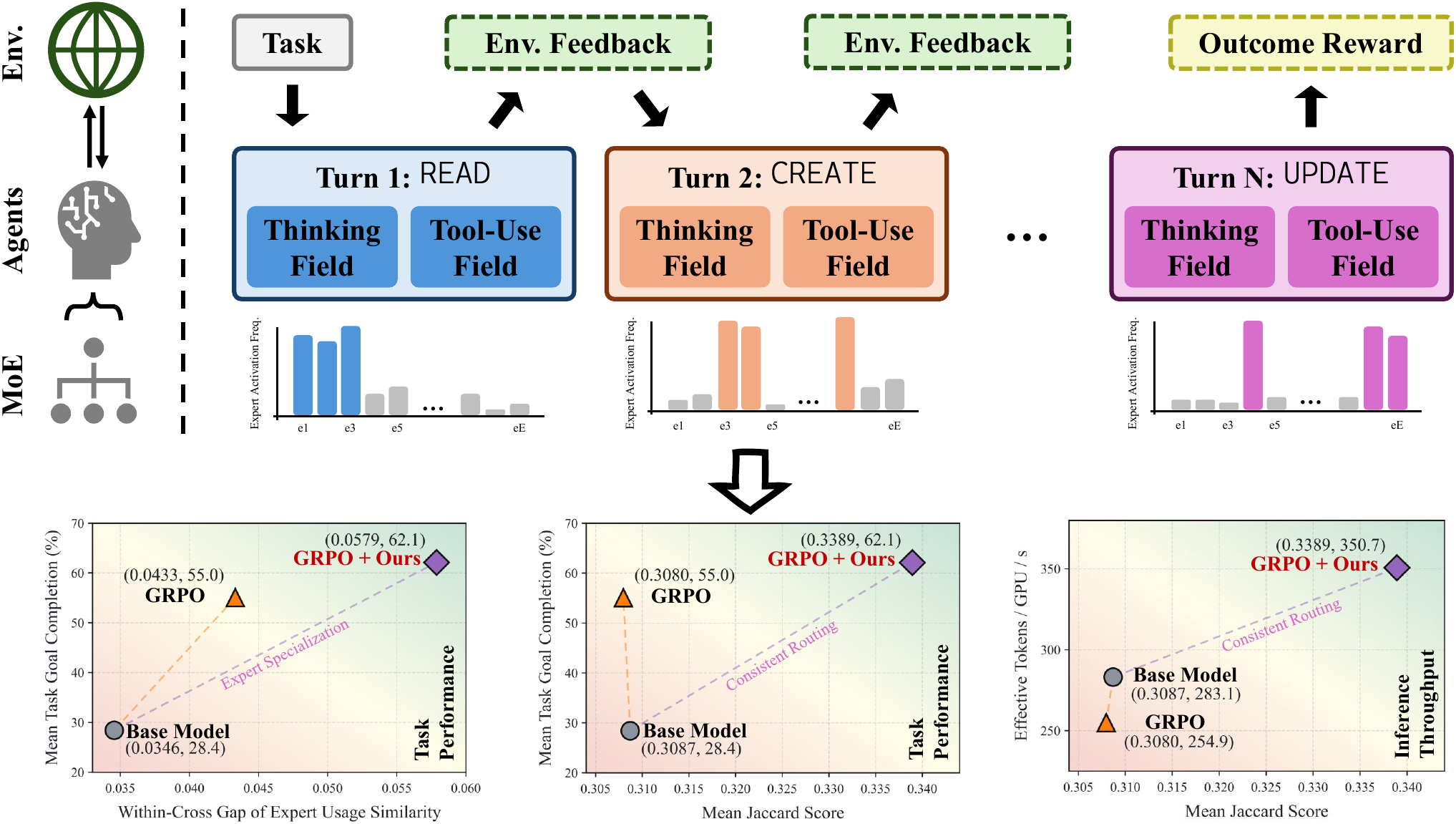}
    \caption{Overview of MoE routing control for agentic trajectories. \textbf{Top:} In each turn of interaction with the environment, agents generate \emph{thinking} and \emph{tool-use} fields, then receive \emph{environment feedback}. Our routing control framework aims to align expert selection with agentic operations (e.g., \texttt{READ}, \texttt{CREATE}, and \texttt{UPDATE}) in each turn, assigning distinct expert groups to different operations and encouraging similar expert selections within each turn. \textbf{Bottom:} With \texttt{Qwen3-30B-A3B} on AppWorld, our routing control framework (when used with standard RL) unlocks the performance limits and raises the inference throughput, by improving expert specialization (larger with-cross gap of expert usage similarity in Appendix~\ref{appendix:metric_specialization}) and consistency (higher Jaccard scores).}
    \label{fig:effect_overview}
\end{figure}

\section{Preliminary}
\subsection{Agentic Reinforcement Learning}
\label{sec:prelim_agentic_rl}

We define agentic trajectories to contain several rounds of reasoning and tool use, in the same spirit as ReAct~\citep{yao2023react}. For example, trajectories in AppWorld~\citep{trivedi2024appworld} and AutomationBench~\citep{shepard2026automationbench} involve reading information, creating an object, and updating an object in later turns. Concretely, at turn $i$, the policy $\pi_{\bm{\theta}}$ generates a \emph{thinking field} $y_i^{\text{think}}$ to perform textual reasoning for the task, followed by a \emph{tool-use field} $y_i^{\text{tool}}$ containing executable code or a tool call. The environment executes the tool use and returns feedback that guides subsequent turns. For a task $x$, this interaction produces an $N$-turn trajectory
\begin{equation}
    \tau=\left[y_1^{\text{think}}; y_1^{\text{tool}}; v_1;
    \ldots; y_N^{\text{think}}; y_N^{\text{tool}}; v_N\right],
    \label{eq:prelim_trajectory}
\end{equation}
where $v_i$ is the intermediate environment feedback.

Reinforcement learning (RL) uses an outcome reward $R(\tau)$ as the training signal. Specifically, let $\tau_i$ denote the task $x$ together with all generated fields and environment feedback before turn $i$. The simplest policy gradient for maximizing the expected outcome reward is
\begin{equation}
    \nabla_{\bm{\theta}} J_{\text{RL}}(\bm{\theta})
    =\mathbb{E}_{\tau\sim\pi_{\bm{\theta}}(\cdot | x)}\!\left[
    R(\tau) \cdot \sum_{i=1}^{N}\nabla_{\bm{\theta}}\log
    \pi_{\bm{\theta}}\!\left(y_i^{\text{think}},y_i^{\text{tool}}\mid \tau_i\right)
    \right].
    \label{eq:prelim_policy_gradient}
\end{equation}
Environment feedback supplies context but contributes no policy log-probability term~\citep{chen2025reinforcement}. Practical algorithms such as PPO~\citep{schulman2017proximal} or GRPO~\citep{shao2024deepseekmath} replace the outcome reward with an advantage score, requiring further value estimation.

\subsection{MoE Models: Efficiency and Expert Specialization}
\label{sec:prelim_moe}

Generating the thinking and tool-use fields requires a sequence of token-level computations. An MoE policy performs these computations with sparse activation: each MoE layer contains $M$ feed-forward experts, and a single-layer router selects only $k\ll M$ experts per token~\citep{shazeer2017}. Let $\mathcal{L}_{\text{MoE}}$ denote the set of MoE layers and $t$ a token position in the interaction context. At layer $l\in\mathcal{L}_{\text{MoE}}$, the router maps the hidden representation $\mathbf{h}_{l,t}$ to expert logits and then computes a softmax distribution over all experts:
\begin{equation}
    \mathbf{z}_{l,t}=W_l\mathbf{h}_{l,t},
    \qquad
    \mathbf{p}_{l,t}=\operatorname{softmax}(\mathbf{z}_{l,t}),
    \label{eq:prelim_routing}
\end{equation}
where $W_l$ is the router's weight matrix and $p_{l,t}[e]$ is the probability assigned to expert $e$. The router selects the top-$k$ experts, denoted by $\mathcal{S}_{l,t}$, and combines their output hidden states as a weighted sum using routing softmax probabilities. This sparse activation enables efficient parameter usage during inference~\citep{yang2025qwen3, qwen3.5}.

In agentic tasks, we study expert specialization~\citep{dai2024deepseekmoe, bo2026smoes, wang2026myth} through expert usage at both the token and turn levels. The expert-usage examples in Fig.~\ref{fig:effect_overview} illustrate our framework's intended outcome: different operations preferentially use distinct expert groups. Standard RL optimizes task completion without explicitly optimizing the routing strategy.

\section{Method}\label{sec:method}
In this section, we introduce the proposed hierarchical routing control framework, namely turn-level control (Section~\ref{sec:turn-level-control}) and token-level control (Section~\ref{sec:token-level-control}). We also discuss the MoE training stability issues induced by our framework and provide an effective solution in Section~\ref{sec:entropy-gated-control}.

Before introducing the routing control method, we analyze the default routing statistics of off-the-shelf MoE models in Fig.~\ref{fig:method_pre_analysis} and Fig.~\ref{fig:method_pre_analysis_automationbench}. At the turn level, we try different grouping criteria beyond operations and compare within-group and cross-group expert-usage similarity. This similarity is computed across the expert distributions of token pairs. The gap between within- and cross-group similarity has been used to monitor the expert specialization of MoE models~\citep{wang2026myth}. The results demonstrate that off-the-shelf MoE models naturally tend to use distinct expert groups for different operations. At the token level, we also observe that expert overlap for tokens within the same field and turn is relatively higher when compared to any cross-field or cross-turn scenario. Metrics used in the analysis are detailed in Appendix~\ref{appendix:definition}.

However, the default routing statistics are often perturbed during RL training. As shown in Section~\ref{sec:exp-routing}, RL injects noise into both token-level and turn-level routing statistics, which hinders task performance and inference efficiency. These issues motivate our explicit routing control framework to denoise expert selection.

\begin{figure}[t]
    \centering
    \includegraphics[width=\linewidth]{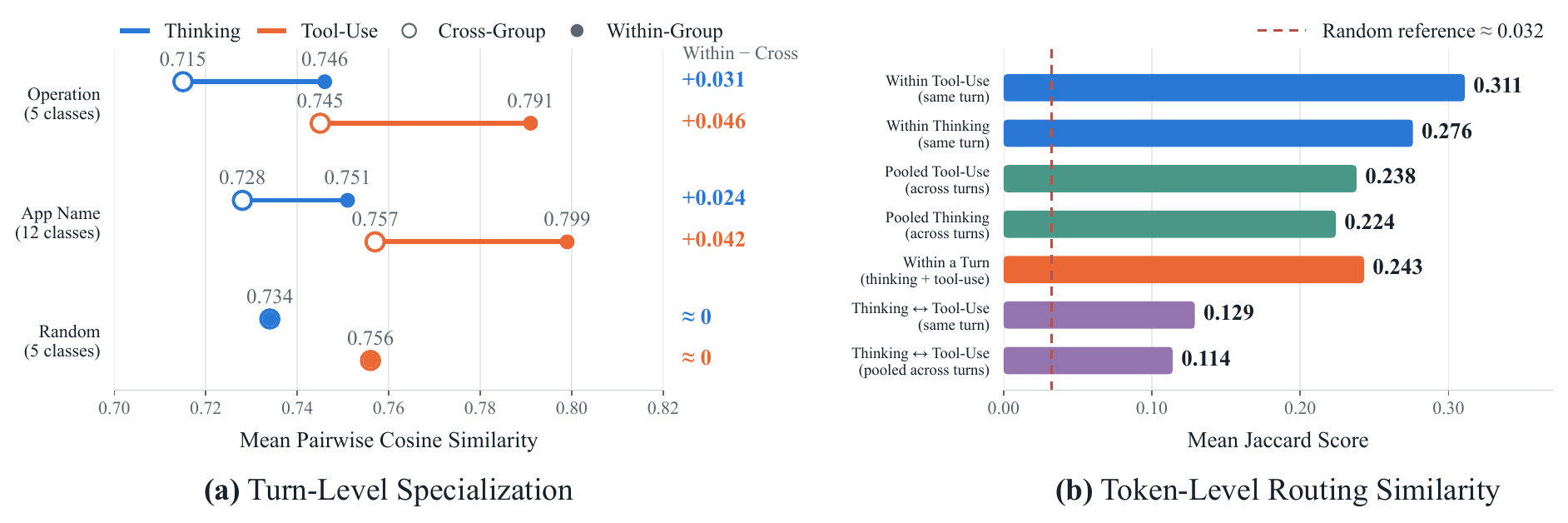}
    \caption{Default routing statistics of \texttt{Qwen3-30B-A3B} on AppWorld. \textbf{Left:} Pairwise cosine similarity of token-level expert distributions; the difference between within group and cross-group cosine similarity values indicates a relatively higher expert overlap between tokens that share the same operation label. \textbf{Right:} Jaccard score between token-level expert sets is relatively higher within the same field and turn indicating more overlapping expert routing patterns.}
    \label{fig:method_pre_analysis}
\end{figure}

\begin{figure}[t]
    \centering
    \includegraphics[width=\linewidth]{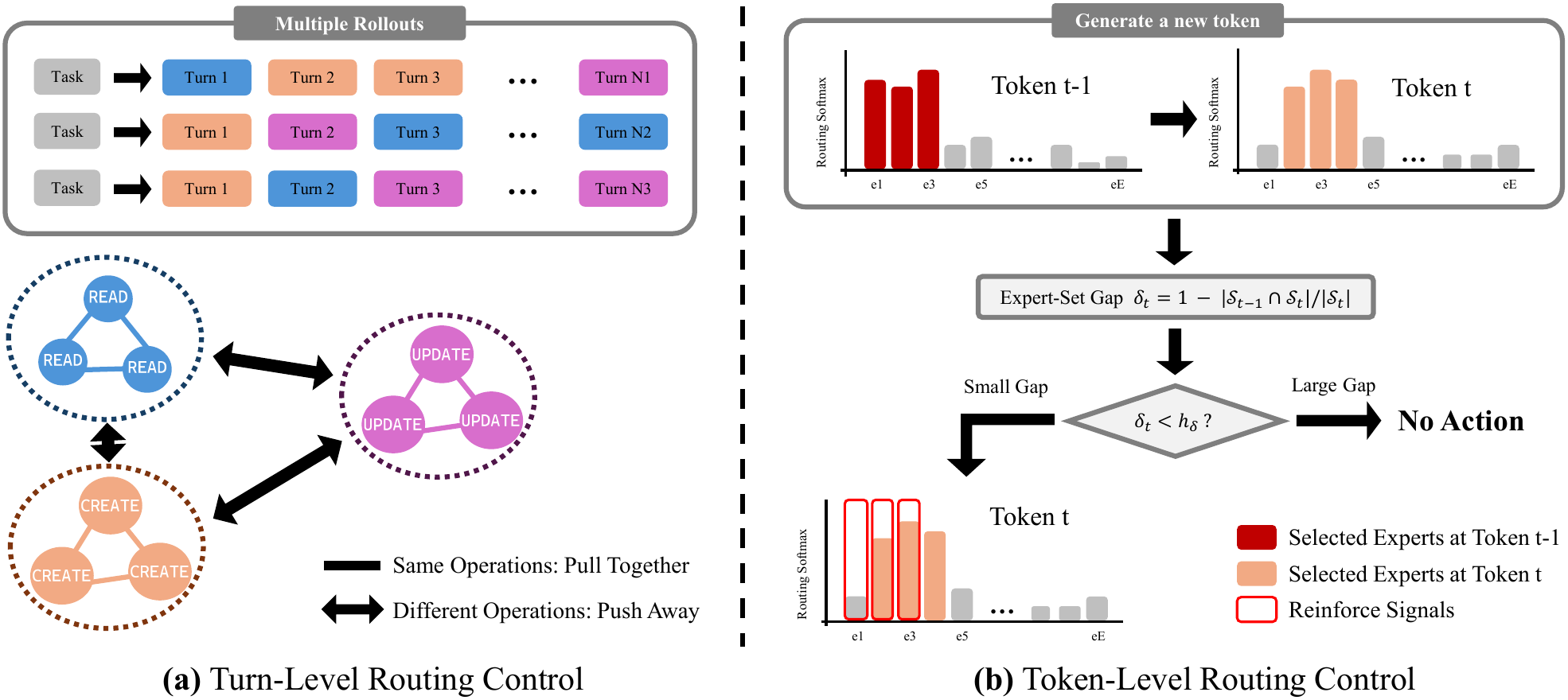}
    \caption{Overview of our hierarchical routing control framework. \textbf{Left:} 
    Turn-level control aggregates router distributions within each field and maximizes the mutual information between the routing distribution and the turn's operation labels. \textbf{Right:} Token-level control measures the top-$k$ expert-set gap between adjacent tokens and reinforces the preceding expert set only when this gap is small. The entropy gate is omitted here for clarity.}
    \label{fig:method_overview}
\end{figure}

\subsection{Turn-Level Operation-Aware Control}
\label{sec:turn-level-control}

Agentic trajectories consist of multiple interactive turns involving different semantic operations. For example, agents may capture information, create objects, or update existing states in different turns. The tools/APIs actually used may differ but can be broadly categorized into a few semantic groups. We show the concrete rules for grouping operations in Appendix~\ref{appendix:rubric}.

Inspired by \citet{bo2026smoes}, which uses mutual information (MI) between expert bins and modality to encourage expert specialization, we introduce MI-based regularization for turn-level routing control. For each MoE layer, we compute the following quantities separately for thinking and tool-use fields in a turn, omitting the layer and field-type indices for simplicity. Recalling Fig.~\ref{fig:method_pre_analysis}(b), the aforementioned routing structure only applies to a group of thinking/tool-use fields, and is not observable for cross-thinking-tool-use scenario.

Let $\mathcal{T}_i$ denote the token indices of the $i$-th turn in the trajectory. The turn-level expert selection distribution for $i$-th turn is defined as $\mathbf{q}_{i}(e) = \frac{1}{|\mathcal{T}_i|}\sum_{j\in\mathcal{T}_i}\mathbf{p}_j(e)$, where $\mathbf{p}_j(e)$ denotes the router expert selection probability for expert $e$ at token $j$. A turn may contain multiple operations. For each operation $o$, define $y_{io}=\mathbb{I}[o \in \mathcal{O}_i]$, where $\mathcal{O}_i\subset\mathcal{O}$ is the turn's operation-label set. We measure the association between expert identity $E$ and operation presence $Y_o$ through
\begin{equation}
    \mathcal{I}(E;Y_o) = \mathbf{KL}\left(P_o(E,Y_o) \middle\| P(E)\otimes P_o(Y_o)\right).
\end{equation}
Across $N$ eligible turns, we estimate the marginal and joint distributions of experts and operations:
\begin{equation}
    \begin{gathered}
    \hat{P}(e) = \frac{1}{N}\sum_{i=1}^{N} q_i(e),
    \qquad
    \hat{P}_o(y) = \frac{1}{N}\sum_{i=1}^{N}\mathbb{I}[y_{io}=y],
    \\
    \text{and}~~
    \hat{P}_a(e,y) = \frac{1}{N}\sum_{i=1}^{N} q_i(e)\mathbb{I}[y_{io}=y],~~ e \in \{e_1, e_2,...,e_M\},~~ y\in\{0,1\}.
    \end{gathered}
\end{equation}
Then, the stabilized turn-level MI objective is
\begin{equation}
    J_{\text{turn}} = \frac{1}{|\mathcal{O}|}\sum_{o\in\mathcal{O}}\sum_e\sum_{y\in\{0,1\}}\hat{P}_o(e,y) \cdot \operatorname{clip}\!\left(\log\frac{\hat{P}_o(e,y)+\epsilon}{\hat{P}(e)\hat{P}_o(y)+\epsilon},-h_c,h_c\right),
    \label{eq:J_turn}
\end{equation}
where $\mathcal{O}$ contains all possible operations. We use $\epsilon=10^{-12}$ for numerical stability and $h_c=20$ to bound the pointwise MI values used as control signals. Practically, we compute $J_{\text{turn}}$ over all trajectories in a batch, and require at least 8 turns to construct the expert-operation association. No MI signal is applied when there are not enough turns.

Maximizing $J_{\text{turn}}$ encourages turns with the same operation to favor similar experts and turns with different operations to develop distinct expert preferences. For each operation $o$, the log-ratio in Eq.~\ref{eq:J_turn} identifies experts that are over- or under-represented among turns with $Y_o=y_{io}$ relative to their overall usage. Turns containing the same operation receive a shared signal from its MI term, reinforcing experts associated with that operation and suppressing less associated experts. Turns without that operation receive complementary preferences, separating the two operation-conditioned mean distributions. Thus, the objective strengthens existing expert-operation associations and encourages consistency through shared expert preferences.

The relation to load balancing~\citep{shazeer2017} follows from the underlying MI decomposition $I(E;Y_o)=H(E)-H(E\mid Y_o)$. The marginal-entropy term favors diverse overall expert usage, while the conditional-entropy term favors more concentrated usage within each operation state. Standard load-balancing losses encourage approximately uniform expert utilization without using operation labels, while the MI objective additionally rewards operation-dependent usage. Therefore, $J_{\text{turn}}$ complements load balancing by promoting operation specialization, but does not guarantee uniform expert utilization. We also empirically compare load-balancing loss with the proposed turn-level routing control in Appendix~\ref{appendix:load_balance}.

\subsection{Token-Level Local Consistency Control}
\label{sec:token-level-control}

RL tends to inject noise into token-level expert selection, increasing expert switching between adjacent tokens (empirical evidence is shown in Section~\ref{sec:exp-routing}). This can disrupt local routing consistency within a thinking or tool-use field and introduce overhead during trajectory generation. \citet{yang2026phase} similarly motivates temporally consistent expert assignments for coherent agent behavior, using a switching penalty between consecutive environment steps. We comprehensively compare multiple variants of token-level routing control (comparison details in Appendix~\ref{appendix:variant}) and determine that, for widely used MoE architectures such as Qwen's MoE models, a simple adjacent-token loss with a gap threshold is the most reliable choice.

As illustrated in Fig.~\ref{fig:method_overview}(b), we reinforce the previous token's expert set only when the current selection is already similar to the previous token's. For adjacent token positions $t-1$ and $t$ within the same field, we measure the expert-set gap for each MoE layer as $\delta_{t} = 1 - |\mathcal{S}_{t-1} \cap \mathcal{S}_{t}|/|\mathcal{S}_{t}|$. Given a threshold $h_{\delta}\in[0,1]$, we encourage the current router to retain the previous token's expert selection by minimizing
\begin{equation}
    \ell_{t}^{\text{token}}
    = -\frac{\mathbb{I}[\delta_{t} < h_{\delta}]}{|\mathcal{S}_{t-1}|}
    \sum_{e\in S_{t-1}} \log p_{t}[e].
    \label{eq:token-loss}
\end{equation}
The previous token's router probabilities and the gate are treated as detached variables. Only the current token's probabilities are differentiable. The token-level objective $J_{\text{token}} = -\frac{1}{N}\sum_{i=1}^N\sum_{t\in\mathcal{T}_i}\ell_{t}^{\text{token}}$ averages this per-token loss over all within-field adjacent token pairs. Pairs crossing field or turn boundaries are excluded.

The expert-set gap threshold $h_{\delta}$ makes this control selective. A small gap is treated as a local perturbation of an existing route, whereas a large gap may reflect a meaningful semantic transition and receives no consistency penalty. Applying the penalty to every pair could suppress such transitions and impose overly rigid routing.

This control reduces local routing noise while allowing expert selections to change when needed, complementing the operation-level specialization in Section~\ref{sec:turn-level-control}. Empirically, it also improves rollout throughput and reduces rollout time (see Section~\ref{sec:exp-efficiency} for results and Appendix~\ref{appendix:efficiency_reason} for explanation).

\subsection{Entropy-Gated Control for Stable MoE RL}
\label{sec:entropy-gated-control}

The stability issues of MoE RL have been studied for single-turn tasks~\citep{ma2026stabilizing, zhang2026towards}. On multi-turn agentic tasks, our routing controls may also destabilize RL training. Small router score changes near the top-$k$ boundary can alter expert selection, and strong auxiliary updates may interfere with policy optimization. Empirically, training collapse is accompanied by a sharp fluctuation in policy entropy~\citep{wang2026on, li2026addressing, lochab2026uniform}, which motivates using it as a stability safeguard.

For a generated token position $t$, the policy entropy is defined as
\begin{equation}
    \mathcal{H}_t = -\sum_{v\in\mathbb{V}} \pi_{\bm{\theta}}(v|\tau_{<t}) \cdot \log\pi_{\bm{\theta}}(v|\tau_{<t}),
\end{equation}
where $\mathbb{V}$ is the vocabulary and $\tau_{<t}$ contains the generated tokens and environment feedback before position $t$. Let $\mathcal{H}$ denote its average over all thinking and tool-use tokens in the rollout batch at one RL step. We apply the routing control only when $H_{\text{low}} \leq \mathcal{H} \leq H_{\text{high}}$. Therefore, the entropy-gated objective is
\begin{equation}
    J = J_{\text{RL}} + g \cdot (\lambda_{\text{turn}}J_{\text{turn}} + \lambda_{\text{token}}J_{\text{token}}),
\end{equation}
where $g = \mathbb{I}[H_{\text{low}} \leq \mathcal{H} \leq H_{\text{high}}]$. When the gate closes, the routing-control gradients are detached while the standard RL updates continue. Routing control resumes when the monitored policy entropy $\mathcal{H}$ returns within the target range.

To empirically show these stability issues and the effect of entropy gating, we compare training curves using the same routing control and different entropy gates in Fig.~\ref{fig:entropy-gate}\footnote{We use $\lambda_{\text{turn}} = 2\times10^{-6}$ and $\lambda_{\text{token}} = 2\times10^{-7}$ here, as well as in the main experiments.}. Without entropy gating, policy entropy spikes and both evaluation splits collapse to zero around step 100. A threshold $H_{\text{high}}=0.4$ delays collapse to around step 140. With $H_{\text{high}}=0.2$, training remains stable, retaining task performance on both splits. This supports the effectiveness of entropy gating.

\begin{figure*}[t]
    \centering
    \includegraphics[width=\textwidth]{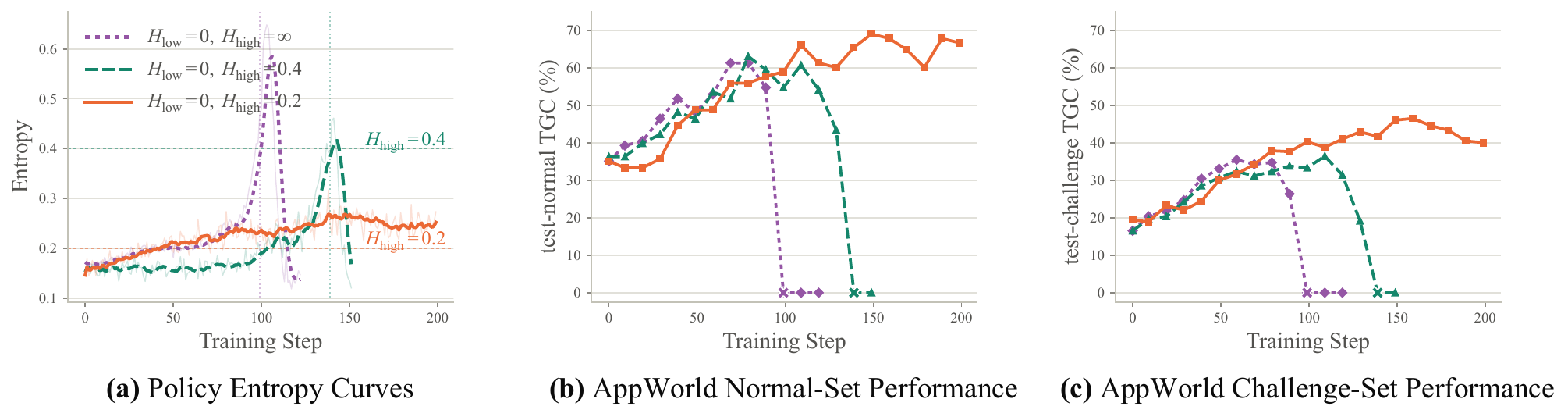}
    \caption{Effect of entropy gating on RL with routing control. Crosses mark the first evaluation at which both splits reach zero TGC. Routing control alone induces training collapse, and RL training remains stable for all 200 steps when an appropriate entropy gate (target range $[0, 0.2]$) is applied.}
    \label{fig:entropy-gate}
\end{figure*}

\section{Experiments}\label{sec:exp}
\subsection{Setup}

\paragraph{Benchmarks and metrics.}
We train on 90 AppWorld~\citep{trivedi2024appworld} tasks and 480 AutomationBench~\citep{shepard2026automationbench} public training tasks spanning six domains. For AutomationBench, we split the 600-sample public set into a training and a test subsets, each containing all six domains. AppWorld agents interact with applications through executable Python code. We report task goal completion (TGC), the percentage of fully completed tasks, and scenario goal completion (SGC), the percentage of scenarios with all constituent tasks completed, on \texttt{test-normal} and \texttt{test-challenge}. AutomationBench agents execute structured tool calls and receive partial-credit rewards based on the fraction of assertions passed; we report scores on withheld public subset, which are not used for training.

\paragraph{Models and training.}
We use the instruction-following versions of \texttt{Qwen3-30B-A3B-2507} for AppWorld and the more powerful \texttt{Qwen3.5-35B-A3B} for the harder AutomationBench, to guarantee initial success rates. The training and rollout generation of both models are colocated on 8 NVIDIA B200 GPUs. Within each baseline--control comparison, we keep the model, rewards, and RL hyperparameters fixed. We evaluate checkpoints periodically. The reported learning curves span 200 AppWorld steps and 100 AutomationBench steps.

\paragraph{Compared methods.}
We augment PPO~\citep{schulman2017proximal}, GRPO~\citep{shao2024deepseekmath}, LOOP~\citep{chen2025reinforcement}, and GiGPO~\citep{feng2026group} with our routing controls. Unless otherwise stated, ``+ Ours'' combines turn-level control, token-level control, and entropy gating (Section~\ref{sec:method}). We use GRPO on AppWorld for the routing and efficiency analysis.

\subsection{End-to-End Task Performance}
\label{sec:exp-performance}

Table~\ref{tab:appworld-main} shows that our routing control framework improves all RL algorithms across the 2 benchmarks. Specifically, we observe up to 12.9-point improvements in AppWorld~\citep{trivedi2024appworld}. For AutomationBench~\citep{shepard2026automationbench}, our routing control improves LOOP~\citep{chen2025reinforcement} by 11.34 points. The reported numbers are from the best-performance checkpoints on AppWorld \texttt{dev} set and AutomationBench public set respectively. We notice the PPO~\citep{schulman2017proximal} exhibits inconsistent improvement on AppWorld, and assume this to be the outcome of the small set of trajectories for a single prompt. Under PPO, each prompt has far fewer turns than under group-based RL, which makes the calculation of mutual information unstable.

\begin{table}[t]
    \caption{Agentic task performance (in percentages). \textbf{Bold} and \underline{underline} mark the highest and second-highest scores per column. Arrows show percentage-point changes from the corresponding baseline. Our routing control framework improve the agentic RL on both benchmarks, supporting diverse RL algorithms.}
    \label{tab:appworld-main}
    \centering
    \small
    \begin{tabular}{llllll}
        \toprule
        \multirow{4}{*}{\textbf{Method}}
        & \multicolumn{4}{c}{\textbf{AppWorld}}
        & \multicolumn{1}{c}{\textbf{AutomationBench}} \\
        & \multicolumn{4}{c}{\texttt{Qwen3-30B-A3B-2507}}
        & \multicolumn{1}{c}{\texttt{Qwen3.5-35B-A3B}} \\
        \cmidrule(lr){2-5}
        & \multicolumn{2}{c}{\textbf{test-normal}}
        & \multicolumn{2}{c}{\textbf{test-challenge}}
        & \\
        \cmidrule(lr){2-3}\cmidrule(lr){4-5}
        & TGC (\%) & SGC (\%) & TGC (\%) & SGC (\%) & \\
        \midrule
        Base Model
        & 35.1 & 14.3 & 18.7 & 7.2 & 22.04 \\
        \midrule
        PPO
        & 63.1 & 37.5 & 35.5 & 16.5 & 54.56 \\
        \quad + Ours
        & 60.7 \textcolor{red!70!black}{$(2.4\,\downarrow)$}
        & 42.9 \textcolor{green!50!black}{$(5.4\,\uparrow)$}
        & 38.6 \textcolor{green!50!black}{$(3.1\,\uparrow)$}
        & 20.9 \textcolor{green!50!black}{$(4.3\,\uparrow)$}
        & 57.39 \textcolor{green!50!black}{$(2.83\,\uparrow)$} \\
        \midrule
        GRPO
        & 69.6 & 46.4 & 40.5 & 22.3 & 63.01 \\
        \quad + Ours
        & \textbf{75.0} \textcolor{green!50!black}{$(5.4\,\uparrow)$}
        & \underline{55.4} \textcolor{green!50!black}{$(8.9\,\uparrow)$}
        & \underline{49.2} \textcolor{green!50!black}{$(8.6\,\uparrow)$}
        & \textbf{34.5} \textcolor{green!50!black}{$(12.2\,\uparrow)$}
        & 69.54 \textcolor{green!50!black}{$(6.53\,\uparrow)$} \\
        \midrule
        LOOP
        & 64.9 & 46.4 & 37.4 & 21.6 & 55.50 \\
        \quad + Ours
        & \underline{73.2} \textcolor{green!50!black}{$(8.3\,\uparrow)$}
        & \textbf{58.9} \textcolor{green!50!black}{$(12.5\,\uparrow)$}
        & \textbf{50.4} \textcolor{green!50!black}{$(12.9\,\uparrow)$}
        & \underline{30.9} \textcolor{green!50!black}{$(9.4\,\uparrow)$}
        & 66.84 \textcolor{green!50!black}{$(11.34\,\uparrow)$} \\
        \midrule
        GiGPO
        & 52.4 & 26.8 & 32.9 & 17.3 & \underline{70.01} \\
        \quad + Ours
        & 57.7 \textcolor{green!50!black}{$(5.4\,\uparrow)$}
        & 39.3 \textcolor{green!50!black}{$(12.5\,\uparrow)$}
        & 37.4 \textcolor{green!50!black}{$(4.6\,\uparrow)$}
        & 18.7 \textcolor{green!50!black}{$(1.4\,\uparrow)$}
        & \textbf{70.24} \textcolor{green!50!black}{$(0.23\,\uparrow)$} \\
        \bottomrule
    \end{tabular}
\end{table}

\subsection{Changes in Routing Behavior}
\label{sec:exp-routing}

Fig.~\ref{fig:routing-stats} examines the two controls separately. Under GRPO, the mean Jaccard similarity within each field stays relatively stable, while the token-level control increases the Jaccard score for 7.7\% (Figure~\ref{fig:routing-stats}(a)), indicating improved local consistency. This improved routing consistency demonstrates the effectiveness of the token-level control.

Turn-level control strengthens the expert-operation alignment in later stages. We use the mutual information (MI) between turn-level expert distributions and operation labels (the same as the turn-level control objective) to quantify this dependence~\citep{bo2026smoes}. Late-stage MI increases 23.5\% (Figure~\ref{fig:routing-stats}(b)). Inspired by \citet{wang2026myth}, we also quantify the expert specialization via the gap between within- and cross-group expert-usage similarity, as defined in Appendix~\ref{appendix:metric_specialization}. This diagnostic measures whether tokens tend to use overlapped experts more when they are under the same operation label than different labels (Fig.~\ref{fig:routing-stats}(c)). The results support greater operation separability.

\begin{figure*}[t]
    \centering
    \includegraphics[width=\textwidth]{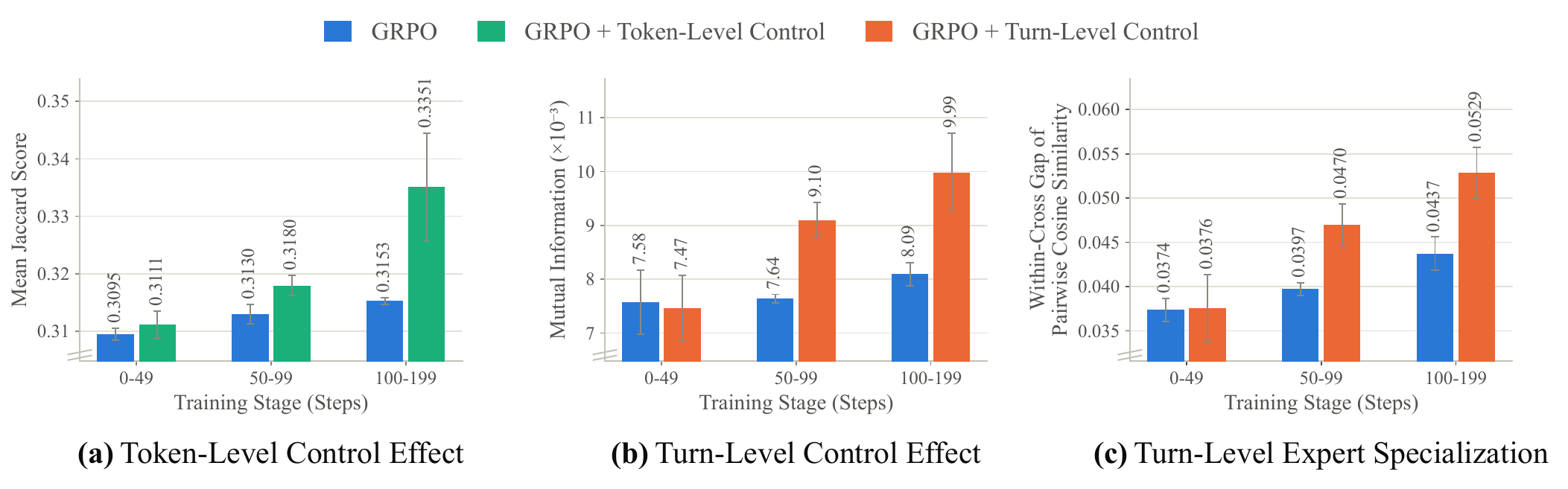}
    \caption{Routing statistics during AppWorld GRPO training. (a) Token-level control improve routing similarity. The numbers are averaged over all token pairs within fields. (b) Turn-level control increases the mutual information between expert selection and operation labels. (c) Turn-level control makes expert selections more separable by the operation labels. The reported numbers are averaged over all of the evaluation trajectories collected from the specified training step intervals. The error bars are the standard deviation of all data points.}
    \label{fig:routing-stats}
\end{figure*}

\subsection{Inference Efficiency}
\label{sec:exp-efficiency}

Fig.~\ref{fig:throughput_bars_extended} reports rollout wall time and effective generated tokens per GPU second. Each control improves both metrics across all three training stages. In steps 100--200, the combined method increases throughput from 224 to 316 tokens/GPU/s (+41.1\%) and reduces wall time from 224 to 174 seconds (a 22.3\% reduction). The combination achieves the highest late-stage throughput, whereas token-level control alone gives the shortest wall time (133 seconds). These end-to-end measurements demonstrate that improving the routing consistency has benefit on inference efficiency. We also have a detailed discussion about this efficiency benefit in Appendix~\ref{appendix:efficiency_reason}.

\begin{figure}[t]
    \centering
    \includegraphics[width=0.9\linewidth]{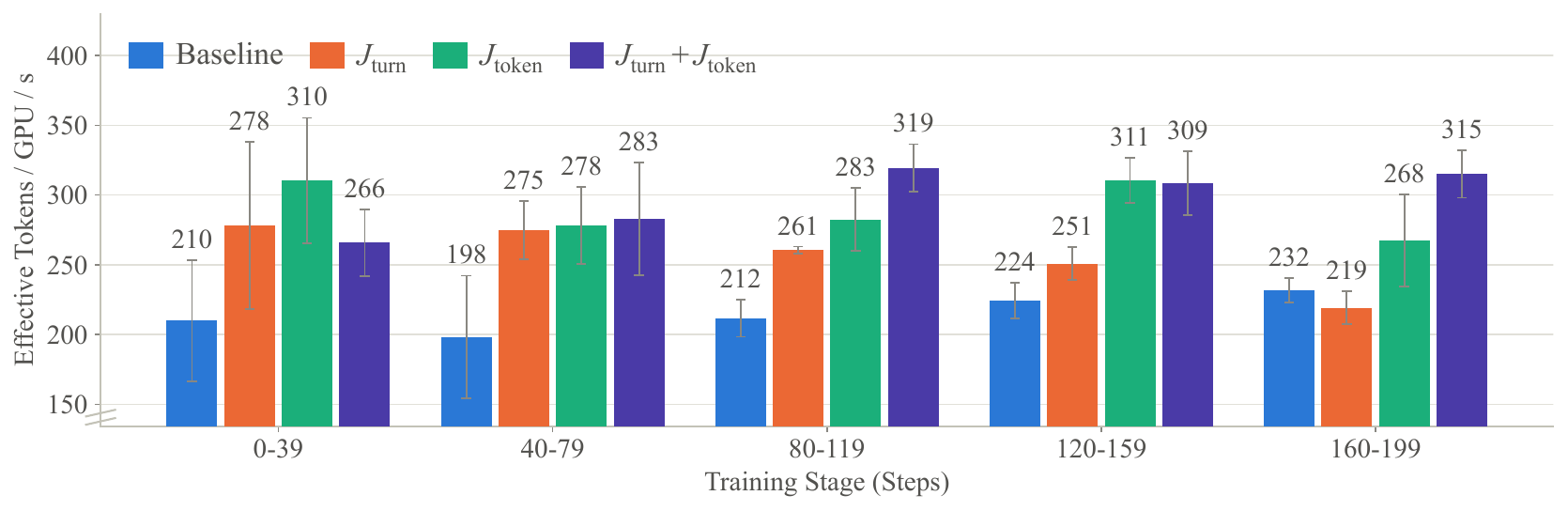}
    \caption{AppWorld rollout efficiency comparison under GRPO with turn-level control, token-level control, or both. Both controls improve inference throughput. The token-level control contributes the most to inference efficiency at the early training stage. The reported numbers are averaged over all of the evaluation trajectories collected from the specified training step intervals. The error bars are the standard deviations of all data points.}
    \label{fig:throughput_bars_extended}
\end{figure}

\section{Related Works}
\subsection{RL for Interactive LLM Agents}
\label{sec:related_agentic_rl}

Reinforcement learning enables LLM agents to improve through repeated interaction with stateful environments. LOOP~\citep{chen2025reinforcement} demonstrates this approach on long-horizon interactive tasks using a critic-free policy optimization objective. However, terminal outcome rewards provide limited guidance about the contribution of individual turns. GiGPO~\citep{feng2026group} derives step-level advantages by grouping actions at shared environment states, while SALT~\citep{li2026salt} constructs trajectory graphs to refine advantage assignment from outcome rewards. RLVMR~\citep{zhang2025rlvmr} provides additional process supervision through verifiable rewards for intermediate meta-reasoning behaviors.

Sparse rewards also make successful trajectories difficult to discover. SGE~\citep{szot2026expanding} guides exploration with high-level language strategies, while ADAPO~\citep{petrenkoentropy} adaptively adjusts policy clipping to preserve entropy and output diversity. Other studies also address the cost of environment interaction. Simia-RL~\citep{li2025simulating} uses reasoning models to simulate environment feedback and rewards, reducing the engineering required to construct executable training environments. SAGE~\citep{wang2026reinforcement} learns to generate and reuse a skill library, reducing interaction steps and generated tokens when solving related tasks. These approaches improve the learning signal, exploration, or interaction procedure. Our work studies how agentic RL changes expert selection within MoE models and uses the structure of turns to regularize their routing. This control complements the underlying policy-gradient estimator.

\subsection{MoE Routing Control}
\label{sec:related_moe_routing}

Recent work studies routing as a source of both instability and adaptability during MoE post-training. Rollout Routing Replay (R3)~\citep{ma2026stabilizing} stabilizes RL by replaying rollout-time expert assignments during training, aligning the discrete computation paths used for sampling and optimization. MoE-GRPO~\citep{ko2026moe} instead treats expert selection as a stochastic policy and uses reward-based updates to explore expert combinations in vision-language models. These methods address consistency between training and inference or reward-driven routing optimization, without explicitly organizing expert usage around the operations performed by an interactive agent.

Temporal consistency provides another signal for routing control. \citet{shen2026temporally} introduce temporally extended MoE models that retain expert selections across multiple tokens and quantify routing persistence through expert switch rates. ReMoE~\citep{zhu2026remoe} fine-tunes native routers with temporal-locality losses to improve expert reuse under memory-constrained inference. Our token-level control shares the goal of local consistency, but selectively reinforces the preceding expert set only when adjacent selections are already similar within a thinking or tool-use field. This leaves larger routing changes unpenalized by the token-level loss and preserves the native top-$k$ selection rule.

PA-MoE~\citep{yang2026phase} connects routing specialization with agentic RL by learning phase-dependent expert selection and regularizing switching across environment steps. Its reported implementation uses added LoRA experts across transformer layers. SMoES~\citep{bo2026smoes} also promotes structured specialization, maximizing mutual information between soft modality scores and expert bins in vision-language models. Our turn-level objective instead uses operation labels extracted from generated tool-use fields to regularize the native expert distributions. It strengthens existing expert-operation associations without assigning operations to expert groups in advance. Our routing control framework combines this operation-level signal with selective token-level consistency to optimize the routing strategy during agentic RL.

\section{Conclusion}
We study how agentic post-training interacts with expert selection in mixture-of-experts (MoE) agents. We observe that agentic trajectories naturally contain structured expert usage patterns related to the operation labels of each turn, and that ignoring such structure induces suboptimal task performance and efficiency. To address this problem, we introduce a hierarchical routing control framework that combines operation-aware turn-level control with selective token-level routing control. We further use policy entropy to gate the auxiliary routing gradients, preventing the instability caused by routing control. Experiments on AppWorld and AutomationBench show that the proposed framework improves the task performance of multiple RL algorithms, induces the intended routing structure, stabilizes long-term training, and also improves inference efficiency. Overall, our results demonstrate that the structure of agentic trajectories provides a useful signal for organizing MoE usage during post-training.

\section*{AI Use Statement}
We used generative AI tools to improve the clarity and readability of the manuscript, assist in preparing tables and figures, and support code development. The authors take full responsibility for the final content of this work, including its text, tables, figures, and scientific claims.

\bibliography{main}
\bibliographystyle{plainnat}

\small
\newpage

\tableofcontents
\newpage

\appendix

\section{Metric Definitions}\label{appendix:definition}
In this section, we define several evaluation metrics used in the empirical analysis. These metrics provide a foundation for understanding MoE model architectures and routing strategies. We show the specialization and consistency measurement along the RL training in Fig.~\ref{fig:routing_stats_appendix}.

\begin{figure}[!h]
    \centering
    \includegraphics[width=0.7\linewidth]{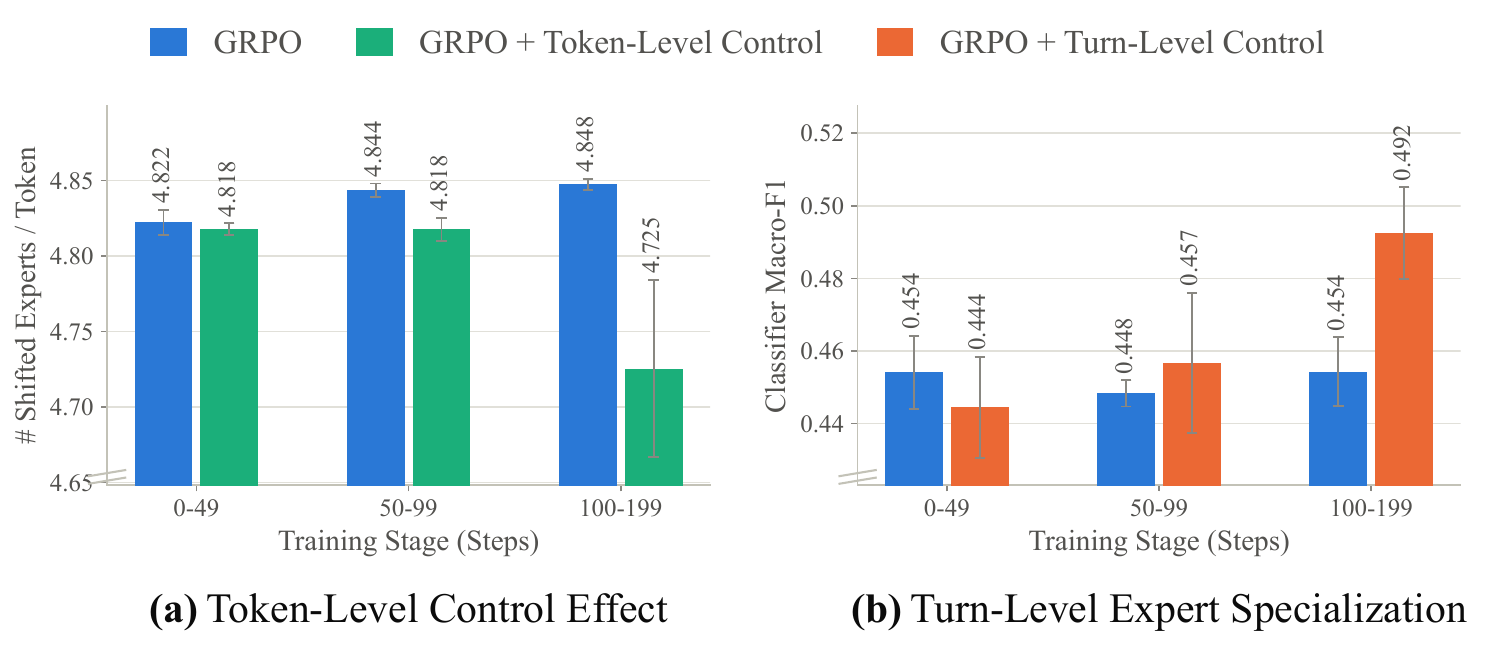}
    \caption{Routing behavior changes during training. Measured by the expert switches and expert-operation classification, our routing control framework improves the routing consistency and expert specialization.}
    \label{fig:routing_stats_appendix}
\end{figure}

Additionally, we show the default routing statistics for \texttt{Qwen3.5-35B-A3B} in Fig.~\ref{fig:method_pre_analysis_automationbench} using our defined metrics in this section. The routing statistics for \texttt{Qwen3.5-35B-A3B} is similar to that of \texttt{Qwen3-30B-A3B}. Off-the-shelf MoE models naturally exhibit expert specialization and routing consistency.

\begin{figure}[!h]
    \centering
    \includegraphics[width=\linewidth]{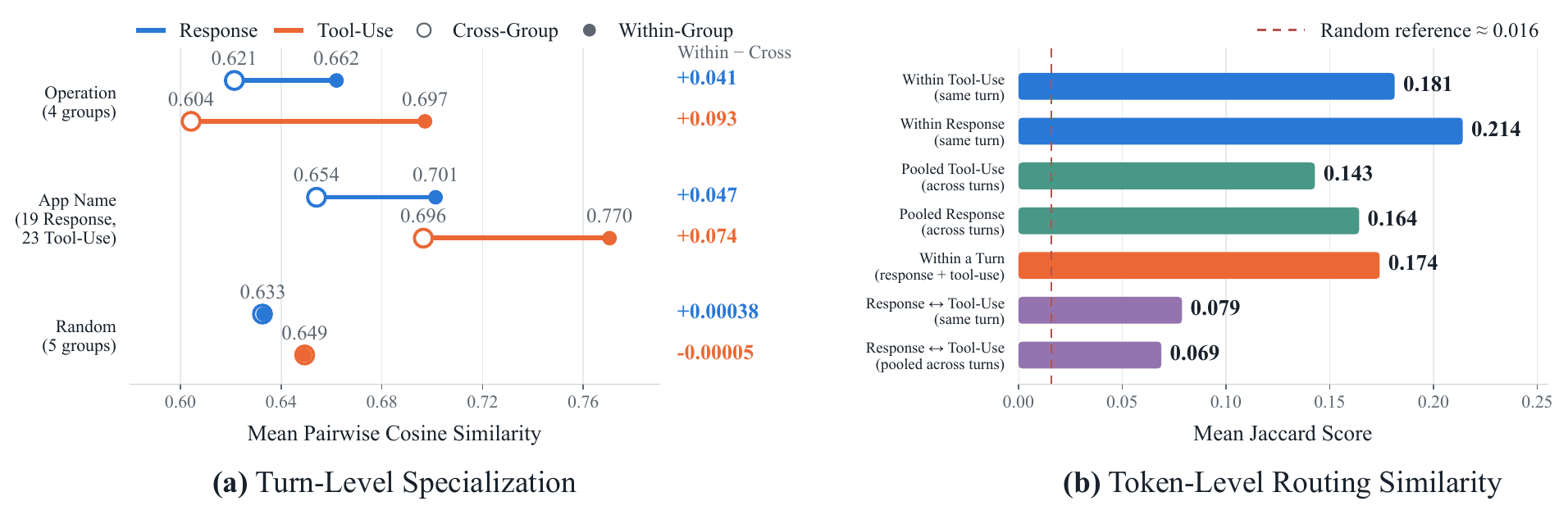}
    \caption{Default routing statistics of \texttt{Qwen3.5-35B-A3B} on Automation. \textbf{Left:} Pairwise cosine similarity of token-level expert distributions; the difference between within group and cross-group cosine similarity values indicates a relatively higher expert overlap between tokens that share the same operation label. \textbf{Right:} Jaccard score between token-level expert sets is relatively higher within the same field and turn indicating more overlapping expert routing patterns.}
    \label{fig:method_pre_analysis_automationbench}
\end{figure}

\subsection{Measuring Expert Specialization}
\label{appendix:metric_specialization}

\subsubsection{Expert-Operation Classification}
To directly show the association between expert selection and turn operations, we train a nearest-centroid classifier to predict turn labels based on input expert indices. We split the expert-operation pairs in the evaluation set into $K$ subsets and use $K$-fold cross-validation to ensure fair scores. Then, we pool the held-out predictions from all $K$ folds and compute
\begin{equation}
    \mathbf{F1}
    = \frac{1}{|\mathcal{O}|}\sum_{o\in\mathcal{O}}
    \frac{2\mathrm{TP}_o}
    {2\mathrm{TP}_o+\mathrm{FP}_o+\mathrm{FN}_o},
\end{equation}
where $\mathcal{O}$ is the set of observed operation classes in the evaluation group, and the counts are computed from the pooled predictions. This metric measures operation discriminability within the evaluation collection.

\subsubsection{Similarity Gap}
To measure turn-level expert specialization in Fig.~\ref{fig:method_pre_analysis}(a), we examine whether turns with the same operation label have more similar routing distributions than turns with different labels. Following \citet{wang2026myth}, we compare expert-usage frequency vectors using cosine similarity. The gap between within- and across-group similarities measures routing specialization with respect to these labels.

We first summarize token-level expert selections into one routing distribution per field in each turn, then group fields by their turn labels. Within-group similarity averages over distinct fields in the same group, whereas across-group similarity averages over fields from two different groups. Both quantities can be computed exactly from sums of normalized routing distributions.

\paragraph{Turn-level routing distributions.}
For a fixed MoE layer and field type (thinking or tool use), let $f$ index a nonempty field instance containing $T_f$ tokens. Let $\bm{m}_t^f \in \{0,1\}^{E}$ indicate the $k$ experts selected at token position $t$, where $E$ is the number of experts and $\sum_{e=1}^{E}\bm{m}_t^f[e]=k$. The routing distribution of field $f$ is
\begin{equation}
    \bm{p}^f = \frac{1}{T_f k}\sum_{t=1}^{T_f}\bm{m}_t^f,
    \qquad \sum_{e=1}^{E}\bm{p}^f[e]=1.
    \label{eq:specialization-routing-distribution}
\end{equation}
We compute the following quantities separately for each layer and field type, omitting these indices for clarity.

\paragraph{Within- and across-group cosine similarity.}
Let $\{\mathcal{G}_1,\ldots,\mathcal{G}_M\}$ be the nonempty turn-label groups, where $\mathcal{G}_m$ contains fields whose turns carry label $m$, and write $n_m=|\mathcal{G}_m|$. Define the normalized field vector and its group sum as
\begin{equation}
    \bm{u}^f = \frac{\bm{p}^f}{\|\bm{p}^f\|_2},
    \qquad
    \bm{s}(\mathcal{G}_m)=\sum_{f\in\mathcal{G}_m}\bm{u}^f.
    \label{eq:specialization-group-sum}
\end{equation}
For $n_m\geq 2$, the within-group similarity excludes self-pairs:
\begin{equation}
        \mathbf{WS}(\mathcal{G}_m)
        = \frac{1}{n_m(n_m-1)}
           \sum_{\substack{f,g\in\mathcal{G}_m\\f\neq g}}
           (\bm{u}^f)^\top\bm{u}^g
        = \frac{\|\bm{s}(\mathcal{G}_m)\|_2^2-n_m}
                 {n_m(n_m-1)}.
    \label{eq:specialization-within}
\end{equation}
For $m\neq n$, the across-group similarity is
\begin{equation}
        \mathbf{CS}(\mathcal{G}_m,\mathcal{G}_n)
        = \frac{1}{n_m n_n}
           \sum_{f\in\mathcal{G}_m}\sum_{g\in\mathcal{G}_n}
           (\bm{u}^f)^\top\bm{u}^g
        = \frac{\bm{s}(\mathcal{G}_m)^\top\bm{s}(\mathcal{G}_n)}
                 {n_m n_n}.
    \label{eq:specialization-across}
\end{equation}

\begin{lemma}[Exact computation from group sums]
\label{lem:specialization-complexity}
Given the $E$-dimensional field routing distributions, all within-group similarities for groups of size at least two and all across-group similarities can be computed exactly in
\begin{equation}
    O\!\left(E\sum_{m=1}^{M}n_m+EM^2\right)
\end{equation}
time by computing and caching each group sum once. For equally sized groups with $n_m=n$, this reduces the time complexity from $O(EM^2n^2)$ for explicit pairwise computation to $O(E(Mn+M^2))$.
\end{lemma}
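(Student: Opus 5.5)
The proof has two parts: correctness of the closed forms in Eq.~\ref{eq:specialization-within} and Eq.~\ref{eq:specialization-across}, and a count of arithmetic operations.

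\textbf{Correctness.} Expand the squared norm of the group sum:
\begin{equation*}
\|\bm{s}(\mathcal{G}_m)\|_2^2=\sum_{f,g\in\mathcal{G}_m}(\bm{u}^f)^\top\bm{u}^g .
\end{equation*}
Split this double sum into diagonal terms $f=g$ and off-diagonal terms $f\neq g$. Every field is nonempty, so $\bm{p}^f\neq 0$, $\bm{u}^f$ is well defined, and $\|\bm{u}^f\|_2=1$. The diagonal therefore contributes exactly $n_m$. Subtracting it and dividing by the number of ordered distinct pairs, $n_m(n_m-1)$, gives the within-group identity; this division is valid precisely when $n_m\ge 2$. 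For the across-group identity, bilinearity of the inner product gives
\begin{equation*}
\bm{s}(\mathcal{G}_m)^\top\bm{s}(\mathcal{G}_n)=\sum_{f\in\mathcal{G}_m}\sum_{g\in\mathcal{G}_n}(\bm{u}^f)^\top\bm{u}^g ,
\end{equation*}
and dividing by $n_mn_n$ yields the stated formula. Both identities are exact algebraic equalities, with no approximation beyond floating-point arithmetic.

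\textbf{Complexity.} Normalizing one vector $\bm{p}^f$ costs $O(E)$, so normalizing all $\sum_m n_m$ fields costs $O(E\sum_m n_m)$. Accumulating each $\bm{s}(\mathcal{G}_m)$ by vector additions costs the same order. These $M$ group sums are cached. Each within-group quantity then needs one squared norm, costing $O(E)$, plus $O(1)$ scalar arithmetic, for $O(EM)$ in total. Each of the at most $M(M-1)$ ordered pairs $(m,n)$ needs one inner product of cached sums, costing $O(E)$, for $O(EM^2)$ in total; symmetry halves this but does not change the order. Adding the terms gives $O(E\sum_m n_m+EM^2)$, since the $O(EM)$ term is absorbed into $O(EM^2)$.

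\textbf{Comparison with explicit pairwise computation.} The explicit method evaluates an $O(E)$ inner product for every pair of fields. Within group $m$ there are $n_m(n_m-1)$ such pairs, and between groups $m,n$ there are $n_mn_n$. With $n_m=n$ for all $m$, the total number of pairs is $\Theta(M^2n^2)$, so the explicit cost is $O(EM^2n^2)$. The group-sum method instead costs $O(E(Mn+M^2))$, since $\sum_m n_m=Mn$.

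The only delicate point is bookkeeping: the within-group formula must be restricted to groups with $n_m\ge 2$, and the diagonal correction equals $n_m$ only because each $\bm{u}^f$ is unit-norm. Beyond that, the argument is a routine expansion and operation count.
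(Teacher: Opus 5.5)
Your proposal is correct and follows essentially the same route as the paper's proof: expand $\|\bm{s}(\mathcal{G}_m)\|_2^2$ using $\|\bm{u}^f\|_2=1$ to remove the $n_m$ self-pairs, use bilinearity for the across-group sums, and count $O(E\sum_m n_m)$ for normalization and caching plus $O(EM^2)$ for the squared norms and pairwise inner products of cached sums. Your extra remarks (well-definedness of $\bm{u}^f$ for nonempty fields, the $n_m\ge 2$ restriction, ordered versus unordered pairs) are harmless refinements that the paper leaves implicit.
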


\begin{proof}
Since $\|\bm{u}^f\|_2=1$, expanding the squared group-sum norm gives
\begin{equation*}
        \|\bm{s}(\mathcal{G}_m)\|_2^2
        = \sum_{f,g\in\mathcal{G}_m}(\bm{u}^f)^\top\bm{u}^g
        = n_m+
           \sum_{\substack{f,g\in\mathcal{G}_m\\f\neq g}}
           (\bm{u}^f)^\top\bm{u}^g.
\end{equation*}
Subtracting the $n_m$ self-pairs and dividing by $n_m(n_m-1)$ proves Eq.~\ref{eq:specialization-within}. Similarly, bilinearity gives
\begin{equation*}
    \bm{s}(\mathcal{G}_m)^\top\bm{s}(\mathcal{G}_n)
    = \sum_{f\in\mathcal{G}_m}\sum_{g\in\mathcal{G}_n}
      (\bm{u}^f)^\top\bm{u}^g,
\end{equation*}
which proves Eq.~\ref{eq:specialization-across} after division by $n_m n_n$.

Normalizing the field distributions and constructing all cached group sums costs $O(E\sum_m n_m)$. Computing their $M$ squared norms and $M(M-1)/2$ pairwise inner products costs $O(EM^2)$. In contrast, explicit comparison of all field pairs across $M$ groups of size $n$ costs $O(EM^2n^2)$.
\end{proof}

\subsection{Measuring Temporal Routing Consistency}
\label{appendix:metric_consistency}

\subsubsection{Jaccard Similarity}
Let $\mathcal{S}_{l,t}$ be the set of experts selected by MoE layer $l$ for token $t$. The token-level Jaccard similarity between token $t_1$ and token $t_2$ is defined as
\begin{equation}
    \mathbf{J}(t_1, t_2) = \frac{|\mathcal{S}_{l,t_1} \cap \mathcal{S}_{l,t_2}|}{|\mathcal{S}_{l,t_1} \cup \mathcal{S}_{l,t_2}|}.
\end{equation}
Then, for an agentic trajectory containing $N$ turns, $\tau = \left[y_1^{\text{think}}; y_1^{\text{tool}}; v_1; \ldots; y_N^{\text{think}}; y_N^{\text{tool}}; v_N\right]$, the within-field Jaccard similarity is
\begin{equation}
    \mathbf{J}_{\text{field}}(\tau) = \frac{1}{|\mathcal{L}_{\text{MoE}}| \cdot N}\sum_{l\in\mathcal{L}_{\text{MoE}}}\sum_{i=1}^N \left[\frac{1}{2\binom{|y_i^{\text{think}}|}{2}}\sum_{t_a,t_b\in y_i^{\text{think}}}\mathbf{J}(t_a,t_b) + \frac{1}{2\binom{|y_i^{\text{tool}}|}{2}}\sum_{t_a,t_b\in y_i^{\text{tool}}}\mathbf{J}(t_a,t_b)\right],
\end{equation}
which is an average over all within-field token pairs. The comparative metrics in Fig.~\ref{fig:method_pre_analysis}(b) are defined similarly, with different ways of constructing token pairs.

Additionally, suppose each token independently selects a uniformly random set of $k$ experts from $E$ available experts. The expected intersection and union sizes are $k^2/E$ and $2k-k^2/E$, respectively. Their ratio gives the approximate reference
\begin{equation}
    \mathbf{J}_{\text{ref}}
    = \frac{k^2/E}{2k-k^2/E}
    = \frac{k}{2E-k}.
    \label{eq:routing-random-reference}
\end{equation}
For \texttt{Qwen3-30B-A3B}, where $E=128$ and $k=8$, this gives $\mathbf{J}_{\text{ref}} \approx 0.032$, matching the dashed reference in Fig.~\ref{fig:method_pre_analysis}(b).

\subsubsection{Expert Switches}
Prior work also uses the number of expert switches per token to quantify routing consistency~\citep{yang2026phase}. We define the expert switch count over all within-field token pairs:
\begin{equation}
    \mathbf{ES}(\tau) = \frac{1}{|\mathcal{L}_{\text{MoE}}| \cdot N}\sum_{l\in\mathcal{L}_{\text{MoE}}}\sum_{i=1}^N \Biggl[\frac{1}{|y_i^{\text{think}}|}\sum_{t\in y_i^{\text{think}}} \left(k - |\mathcal{S}_{l,t} \cap \mathcal{S}_{l,t+1}|\right) + \frac{1}{|y_i^{\text{tool}}|}\sum_{t \in y_i^{\text{tool}}} \left(k - |\mathcal{S}_{l,t} \cap \mathcal{S}_{l,t+1}|\right)\Biggr],
\end{equation}
consistent with our hierarchical framework, in which routing consistency is measured only within each turn.

\subsection{Measuring Expert Load Balance}
\label{appendix:metric_load_balance}

Following \citet{shazeer2017}, we use the coefficient of variation (CV) to quantify expert-load imbalance. For a trajectory $\tau$, let $\mathcal{T}_{\tau}$ contain its generated thinking and tool-use tokens, and let $\mathcal{S}_{\ell,t}$ denote the top-$k$ experts selected at MoE layer $\ell$ for token $t$. The assignment count of expert $e$ is
\begin{equation}
    c_{l,e}(\tau)
    = \sum_{t\in\mathcal{T}_{\tau}}
      \mathbb{I}\!\left[e\in\mathcal{S}_{l,t}\right].
\end{equation}
We define the load CV as
\begin{equation}
    \mathbf{CV}_{l}(\tau)
    = \frac{\operatorname{std}_{e}\!\left(c_{l,e}(\tau)\right)}
           {\operatorname{mean}_{e}\!\left(c_{l,e}(\tau)\right)},
    \label{eq:expert_load_cv}
\end{equation}
where both statistics are computed over all experts in the layer, including those with zero assignments. We compute the unsquared CV separately for each trajectory and MoE layer, then average these values. Lower CV indicates more balanced expert usage, with zero corresponding to equal assignment counts across experts.

\section{Variant Routing Controls}\label{appendix:variant}
In this section, we introduce the routing control variants explored in our early research. Although these methods were less promising overall than our final framework in Section~\ref{sec:method}, they help explain which routing signals are useful for agentic RL. Table~\ref{tab:routing-variants} compares their task performance on AppWorld~\citep{trivedi2024appworld}, with the best-performing hyperparameters selected through a grid search ranging from $10^{-7}$ to $10^{-5}$. We do not apply the entropy gate and main experiments' hyperparameters for our routing controls.

\begin{table}[!h]
    \caption{Comparison of routing control variants on AppWorld, measured by task goal completion (TGC). The highest and second-highest scores in each column are bolded and underlined, respectively.}
    \label{tab:routing-variants}
    \centering
    \setlength{\tabcolsep}{5pt}
    \begin{tabular}{lcc}
        \toprule
        \multirow{2}{*}{\textbf{Method}}
        & \textbf{test-normal} & \textbf{test-challenge} \\
        \cmidrule(lr){2-2}\cmidrule(lr){3-3}
        & TGC (\%) & TGC (\%) \\
        \midrule
        Base Model & 34.52 & 19.66 \\
        \midrule
        GRPO & 66.70 & 38.80 \\
        ~~~~+ \texttt{routing\_reg} & 70.24 & 43.17 \\
        ~~~~+ \texttt{switch\_reg}~\citep{yang2026phase} & 72.02 & \underline{46.28} \\
        ~~~~+ \texttt{outlier\_reg} & 0.0 (crashed) & 0.0 (crashed) \\
        ~~~~+ \texttt{routing\_reward} & 73.21 & 41.25 \\
        ~~~~+ \texttt{routing\_clip} & \underline{73.81} & 45.08 \\
        ~~~~+ \texttt{load\_bal}~\citep{shazeer2017} & \textbf{74.40} & 45.56 \\
        \midrule
        ~~~~+ Ours & 72.62 & \textbf{51.80} \\
        \bottomrule
    \end{tabular}
\end{table}

\paragraph{Shared notation.}
We retain the notation from Section~\ref{sec:prelim_moe}: $M$ experts, top-$k$ selections $\mathcal S_{l,t}$, router logits $\mathbf{z}_{l,t}$, probabilities $\mathbf{p}_{l,t}=\operatorname{softmax}(\mathbf{z}_{l,t})$, and MoE layers $l\in\mathcal{L}_{\text{MoE}}$. Let $\mathcal T$ contain the token positions of one generated thinking or tool-use field, omitting its turn and field-type indices. Extending the assignment counts in Appendix~\ref{appendix:metric_load_balance}, define
\begin{equation}
    c_{l,e}(\mathcal T)
    = \sum_{t\in\mathcal T}\mathbb I[e\in\mathcal S_{l,t}],
    \qquad
    \mathcal C_l^{(K)}(\mathcal T)
    = \operatorname{TopK}_{e}\bigl(c_{l,e}(\mathcal T),K\bigr).
    \label{eq:variant-consensus}
\end{equation}
Here, $\operatorname{TopK}$ returns expert indices, and $\mathcal C_l(\mathcal T)=\mathcal C_l^{(k)}(\mathcal T)$ is the majority-voted consensus. Thus, ``majority-voted'' refers to the most frequently selected experts, without requiring selection by more than half the tokens. We compute consensus separately for each field in a turn. All adjacent-token expressions below require at least two tokens and exclude field and turn boundaries.

\subsection{Routing Regularization: Field Consensus}
\label{appendix:routing-reg}

\texttt{routing\_reg} encourages every token in a field to favor its majority-voted experts. We monitor agreement with the uniform distribution over $\mathcal C_l(\mathcal T)$ using
\begin{equation}
    \mathcal L^{\mathrm{cons}}_{l,\mathcal T}
    = -\frac{1}{k|\mathcal T|}
    \sum_{t\in\mathcal T}\sum_{e\in\mathcal C_l(\mathcal T)}
    \log p_{l,t}[e].
    \label{eq:variant-consensus-monitor}
\end{equation}
The above field-wide target can be too restrictive: different tokens within one field may require different experts, which explains its suboptimal performance.

\subsection{Switch Regularization: Adjacent-Distribution Agreement}
\label{appendix:switch-reg}

We adapt the temporal-consistency penalty of PA-MoE~\citep{yang2026phase}, which penalizes expert changes between adjacent tokens at each native MoE layer.

\texttt{switch\_reg} directly extends the expert switch count into a differentiable objective. We use the full router distributions:
\begin{equation}
    \ell^{\mathrm{switch}}_{l,t}
    = 1-\langle p_{l,t-1},p_{l,t}\rangle,
    \qquad
    \mathcal L^{\mathrm{switch}}_{l,\mathcal T}
    = \frac{1}{|\mathcal T|-1}
    \sum_{t:\,t-1,t\in\mathcal T}\ell^{\mathrm{switch}}_{l,t}.
    \label{eq:variant-switch-soft}
\end{equation}
The inner product is the probability that independent expert draws agree. Its complement therefore favors shared, concentrated preferences. Writing $d_{l,t}=\langle p_{l,t-1},p_{l,t}\rangle$, the pair derivatives are
\begin{equation}
    \begin{aligned}
        \nabla_{z_{l,t}}\ell^{\mathrm{switch}}_{l,t}
        &=p_{l,t}\odot\bigl(d_{l,t}\mathbf 1-p_{l,t-1}\bigr),\\
        \nabla_{z_{l,t-1}}\ell^{\mathrm{switch}}_{l,t}
        &=p_{l,t-1}\odot\bigl(d_{l,t}\mathbf 1-p_{l,t}\bigr),
    \end{aligned}
    \label{eq:variant-switch-derivatives}
\end{equation}
where $\mathbf 1$ is the all-ones vector and $\odot$ denotes elementwise multiplication. Both endpoints receive gradients; the neighboring probabilities are not detached from the overall surrogate.

\texttt{switch\_reg} can destabilize RL training by changing the full routing distribution. In particular, even identical adjacent distributions incur $1-\|p\|_2^2$, so the surrogate also favors concentration and can substantially alter router entropy. This explanation for the instability is also consistent with the sensitivity discussed in Appendix~\ref{appendix:router_entropy}. Router entropy here is different from the next-token policy entropy used for gating in Section~\ref{sec:entropy-gated-control}. The evaluated \texttt{switch\_reg} run improves AppWorld test-challenge TGC to $46.28\%$, but remains below our final framework.

\subsection{Outlier Regularization}
\label{appendix:outlier-reg}

\texttt{outlier\_reg} suppresses selected experts outside a field's consensus expert set. For a configurable consensus size $K$, define
\begin{equation}
    \mathcal O_{l,t}
    = \mathcal S_{l,t}\setminus\mathcal C_l^{(K)}(\mathcal T),
    \qquad
    m^{\mathrm{outlier}}_{l,t}
    = \sum_{e\in\mathcal O_{l,t}}p_{l,t}[e].
    \label{eq:variant-outliers}
\end{equation}
The design aims to reduce this selected outlier mass by applying probability-weighted downward signals to outlier logits. Only selected experts outside the target expert set are penalized; the remaining experts receive no direct penalty. We explored $K=8$ and $K=16$.

In our experiments, suppressing these selections perturbed routing too strongly and caused severe performance degradation; the run in Table~\ref{tab:routing-variants} crashed. This result motivated reinforcing existing local agreement instead of directly penalizing departures from a field-wide consensus.

\subsection{Routing Reward and Routing Clipping}
\label{appendix:routing-reward-clip}

These two variants incorporate routing consistency into the policy gradient objective (Eq.~\ref{eq:prelim_routing}). From recorded expert selections, we compute the detached consensus-overlap score
\begin{equation}
    \kappa_t
    = \frac{1}{|\mathcal L_{\mathrm{MoE}}|}
    \sum_{l\in\mathcal L_{\mathrm{MoE}}}
    \frac{|\mathcal S_{l,t}\cap\mathcal C_l(\mathcal T)|}{k}.
    \label{eq:variant-routing-score}
\end{equation}
We set $\kappa_t=1$ outside the configured fields. Let $\widehat A_t$ be the existing advantage and $r_t(\theta)=\pi_\theta(y_t\mid\tau_{<t})/\pi_{\theta_{\mathrm{old}}}(y_t\mid\tau_{<t})$ the policy probability ratio, where $\tau_{<t}$ includes the task and interaction context before the generated token $y_t$. Define the clipped maximization objective
\begin{equation}
    \mathcal J_{\mathrm{clip}}
    = \mathbb E_t\!\left[
    \min\!\left\{
    r_t(\theta)A_t,
    \operatorname{clip}\!\left(r_t(\theta),1-\epsilon_{\text{low}},1+\epsilon_{\text{high}}\right)A_t
    \right\}\right].
    \label{eq:variant-policy-surrogate}
\end{equation}
The expectation follows the base policy loss's mask and reduction over generated tokens.

\paragraph{Routing reward.}
We augment the standard advantage with a routing inconsistency penalty:
\begin{equation}
    \widetilde A_t
    = \widehat A_t-\lambda_{\mathrm{reward}}(1-\kappa_t).
    \label{eq:variant-routing-reward}
\end{equation}
Lower consistency decreases the token's advantage, while perfect agreement adds no penalty. Despite its name, \texttt{routing\_reward} does not recompute terminal task rewards or return targets.

\paragraph{Routing clipping.}
Another way to add a routing inconsistency penalty to the RL objective is through clipping. We define a scaling factor based on routing inconsistency:
\begin{equation}
    \alpha_t = 1 - \lambda_{\text{clip}}(1 - \kappa_t),
\end{equation}
and apply this factor to the clipping bounds:
\begin{equation}
    \epsilon_{\text{low}}' = \alpha_t \cdot \epsilon_{\text{low}},
    \quad
    \epsilon_{\text{high}}' = \alpha_t \cdot \epsilon_{\text{high}}.
\end{equation}
For $\lambda_{\mathrm{clip}}\in[0,1]$, inconsistent routing narrows the clipping interval around one, while $\kappa_t=1$ preserves the baseline interval. This does not impose a hard bound on the actual policy update.

However, neither variant above effectively changed the routing statistics in our experiments. Because $\kappa_t$ is computed from discrete recorded selections, neither provides a differentiated consistency signal to the current router. Ordinary policy-loss gradients can still reach router parameters through differentiable expert-combination weights, as discussed in Appendix~\ref{appendix:grad_flow}, but they do not explicitly reinforce the consensus experts. Thus, adjusting advantages or clipping bounds alone was insufficient for the intended routing control.

\subsection{Load-Balancing Loss}
\label{appendix:load_balance}

We also evaluate the auxiliary load-balancing loss~\citep{shazeer2017} implemented by the Megatron engine~\citep{megatron-lm}. For a microbatch containing $T$ tokens, define the token frequency $f_{l,e}$ and average expert probability $\bar p_{l,e}$ as
\begin{equation}
    \begin{aligned}
        \mathcal S^{\mathrm{aux}}_{l,t}=\operatorname{TopK}_{e}(p_{l,t}[e],k),
        \quad
        f_{l,e}=\frac{1}{kT}\sum_{t=1}^{T}\mathbb I[e\in\mathcal S^{\mathrm{aux}}_{l,t}],
        \quad
        \text{and}~~
        \bar p_{l,e}=\frac{1}{T}\sum_{t=1}^{T}p_{l,t}[e].
    \end{aligned}
    \label{eq:variant-balance-statistics}
\end{equation}
The above statistics are recomputed from current router probabilities, rather than taken from replayed routes. The minimized loss is
\begin{equation}
    \mathcal L^{\mathrm{load-bal}}
    =\frac{\lambda_{\mathrm{load-bal}}}{|\mathcal{L}_{\text{MoE}}| \cdot M}
    \sum_{l\in\mathcal L_{\mathrm{MoE}}}
    \sum_{e=1}^{M}f_{l,e}\bar p_{l,e}.
    \label{eq:variant-balance-loss}
\end{equation}
The frequencies $f_{l,e}$ are detached from the gradient path. Differentiation through $\bar p_{l,e}$ discourages further allocation to heavily used experts and favors less-used experts. It applies to router input tokens beyond the generated thinking and tool-use fields.

To understand the effectiveness of the operation-label-free load-balancing loss, we compare it with our routing control framework in Table~\ref{tab:load_balance_specialization}. We set $\lambda_{\text{load-bal}}=10^{-4}$ and use the same hyperparameters for our method as in the main experiments. The load balance (measured by the CV of expert load as defined in Appendix~\ref{appendix:metric_load_balance}) is improved, but the agentic task performance on \texttt{test-challenge} is not better until we apply the turn-level control. This demonstrates that operation labels are still required for MoE models to improve their hard task solving.

\begin{table*}[!h]
\centering
\caption{Load balancing and task performance on AppWorld. Load-balancing loss effectively reduces the CV of expert load, but this does not translate to better empirical performance.}
\label{tab:load_balance_specialization}
\begin{tabular}{@{}>{\raggedright\arraybackslash}p{0.42\linewidth}ccc@{}}
\toprule
\multirow{2}{*}{\textbf{Method}}
& \multirow{2}{*}{\textbf{Expert-Load CV} $\downarrow$}
& \multicolumn{2}{c}{\textbf{AppWorld TGC} (\%) $\uparrow$} \\
\cmidrule(lr){3-4}
& & test-normal & test-challenge \\
\midrule
Base Model
& 1.240 & 38.69 & 19.66 \\
\midrule
GRPO
& 1.241 & 69.64 & 46.28 \\
~~~~+load-bal~\citep{shazeer2017}
& \textbf{1.155} & \textbf{74.40} & 45.56 \\
~~~~+token-level control \& load-bal
& 1.321 & 69.05 & 43.17 \\
~~~~+token-level \& turn-level control (ours)
& 1.295 & 72.62 & \textbf{51.80} \\
\bottomrule
\end{tabular}
\end{table*}

\section{Technical Details}
In this section, we introduce the technical details of our experiments, particularly MoE model architecture (Appendix~\ref{appendix:detail_moe_arch}), agentic RL infrastructure (Appendix~\ref{appendix:detail_infra}), hyperparameters (Appendices~\ref{appendix:detail_infer} and~\ref{appendix:detail_train}), gradient flow design (Appendix~\ref{appendix:grad_flow}), and operation-label detection (Appendix~\ref{appendix:rubric}).

\subsection{MoE Model Architecture}
\label{appendix:detail_moe_arch}

We take \texttt{Qwen3-30B-A3B} as an example to introduce a few interesting facts about MoE model architecture:

\paragraph{Router module.}
The router module in each MoE layer is just a linear projection (\texttt{torch.nn.Linear}). \texttt{Qwen3-30B-A3B} does not even have a bias term. Therefore, the hidden states of the context are more decisive in choosing which experts to use~\citep{wang2026myth}. Our proposed routing control framework actively changes the hidden states of each MoE layer, which contributes to its instability.

\paragraph{When does expert routing happen?}
Inference in MoE models can be divided into prefilling (processing prompts) and decoding (generating new tokens) stages. Expert routing occurs in both stages, and each token in the trajectories is assigned a set of experts. However, we discuss only expert routing during decoding, as multi-turn agentic trajectories pertain only to the decoding stage.

\paragraph{MoE decoding fusion.}
Production inference engines such as vLLM~\citep{kwon2023efficient} and SGLang~\citep{zheng2024sglang} typically fuse the routing operations (computing router scores + top-$k$ sampling + activating selected experts) into one kernel. Breaking up this kernel will make the inference much less efficient. This also motivates our routing control design, updating the router weights rather than directly assigning expert indices to the routers.

\subsection{Agentic RL Infrastructure}
\label{appendix:detail_infra}

We adapt slime~\citep{slime_github} and extend its default rollout functions to support agents. Agentic trajectories typically contain multiple interactions between agents (generating tokens) and environments (executing code). The number of turns, the token count in each turn, and the execution time are highly variable, which makes standard batched inference impossible. We therefore isolate the decoding of each trajectory into a process, separate from the SGLang server processes. We attach an environment sandbox to each trajectory process, enabling parallel code execution. The high-level idea of the agentic RL rollout framework is summarized in Fig.~\ref{fig:detail_infra}.

\begin{figure}[!h]
    \centering
    \includegraphics[width=\linewidth]{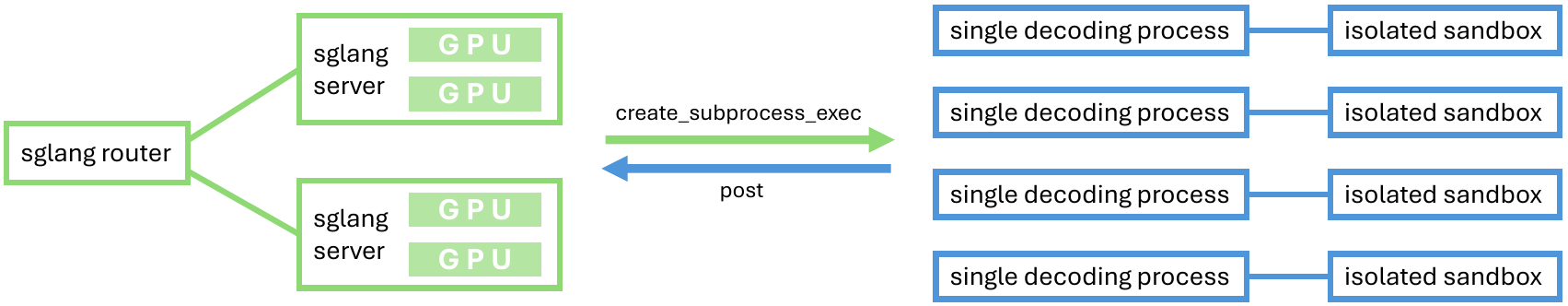}
    \caption{Rollout framework overview. Each trajectory is handled by a separate process with its sandbox for environment execution. SGLang servers generate tokens in response to HTTP requests from trajectory processes.}
    \label{fig:detail_infra}
\end{figure}

\subsection{Inference Settings}
\label{appendix:detail_infer}

We summarize the full inference settings (particularly for evaluation) in Table~\ref{tab:inference_settings}, balancing inference speed and GPU memory capacity (we use B200 nodes\footnote{\url{https://www.nvidia.com/en-us/data-center/dgx-b200/}.} throughout the experiments).

\begin{table}[!h]
\centering
\caption{Evaluation inference settings. Sampling parameters apply at each assistant turn. TP and EP denote tensor and expert parallelism. \textsuperscript{a}AppWorld uses 160 concurrent trajectories, except
for PPO, which uses 80. \textsuperscript{b}AutomationBench uses 144, except for PPO, which uses 40. Concurrency limits are totals across both engines. The explicit context cutoff is checked before each generation turn; an unset cutoff does not remove model or server context limits.}
\label{tab:inference_settings}
\begin{tabular}{@{}>{\raggedright\arraybackslash}p{0.40\linewidth}>{\centering\arraybackslash}p{0.29\linewidth}>{\centering\arraybackslash}p{0.24\linewidth}@{}}
\toprule
\textbf{Setting} & \textbf{AppWorld} & \textbf{AutomationBench} \\
\midrule
Model & \texttt{Qwen3-\allowbreak30B-\allowbreak A3B-\allowbreak Instruct-\allowbreak2507} & \texttt{Qwen3.5-35B-A3B} \\
Inference engine & SGLang & SGLang \\
Inference GPUs & 8 & 8 \\
Engine replicas & 2 & 2 \\
TP / EP per engine & 4 / 4 & 4 / 4 \\
\midrule
Temperature & 0.7 & 0.7 \\
Nucleus sampling ($p$) & 0.8 & 0.8 \\
Top-$k$ sampling & 20 & 20 \\
Evaluation trajectories per task & 1 & 1 \\
Thinking mode & Disabled & Disabled \\
Presence-penalty override & Not set & Not set \\
\midrule
Maximum generated tokens per turn & 1,500 & 2,048 \\
Maximum interaction turns & 50 & 50 \\
Explicit context cutoff (tokens) & Not set & 65,536 \\
Maximum concurrent trajectories
    & 160 / 80\textsuperscript{a}
    & 144 / 40\textsuperscript{b} \\
\bottomrule
\end{tabular}
\end{table}

\subsection{Training Settings}
\label{appendix:detail_train}

For AppWorld~\citep{trivedi2024appworld}, we train \texttt{Qwen3-30B-A3B-Instruct-2507}\footnote{Multi-turn trajectories are very long, exceeding the context window of the older \texttt{Qwen3-30B-A3B}.} for 200 steps. At each step, we sample 10 prompts and generate 16 rollout trajectories per prompt at temperature $1.0$, yielding a nominal batch of 160 trajectories. To reduce waiting for slow rollouts, we terminate the remaining trajectories once every prompt has at least 12 completed trajectories and at least $90\%$ of the full batch (144 trajectories) has completed; completion includes both successful and unsuccessful task outcomes. Aborted trajectories are masked out of the training loss and excluded from group reward normalization. Training trajectories are limited to 30 interaction turns, with at most 1,500 generated tokens per turn and a 3,000-token cap on each environment execution output. We optimize using Adam~\citep{kingma2015adam} with a constant learning rate of $10^{-6}$, gradient-norm clipping at $0.1$, and policy-ratio clipping with $\epsilon=0.2$; both the KL-penalty and entropy-bonus coefficients are set to zero. We evaluate every 10 training steps, allowing up to 50 interaction turns per evaluation trajectory.

\subsection{Injected Gradient Flow in RL Regularization}
\label{appendix:grad_flow}

The RL loss (Eq.~\ref{eq:prelim_policy_gradient}) is computed from the output token probabilities of $\pi_{\bm{\theta}}$. Its gradients can reach router logits indirectly through differentiable expert-combination weights, but do not explicitly promote operation specialization or local routing consistency. We therefore inject auxiliary gradients directly at the router logits $\mathbf{z}_{l,t}$. This supplies explicit routing supervision alongside the standard RL update, without changing the current forward computation.

We introduce $\rho_{l,t}$ to help balance the magnitudes of the RL and routing-control gradients. Let $\mathbf{g}_{l,t}^{\mathrm{in}}$ denote the gradient arriving at $\mathbf{z}_{l,t}$ before auxiliary injection at layer $l$. For a token position $t$ in a turn $\mathcal{T}$, define
\begin{equation}
\rho_{l,t}
= \frac{\|\mathbf{g}_{l,t}^{\mathrm{in}}\|_2}
{\max\!\left\{10^{-8},\,
\max_{u\in\mathcal{T}}\|\mathbf{g}_{l,u}^{\mathrm{in}}\|_2
\right\}}.
\label{eq:control-gradient-scaling}
\end{equation}
This factor lies in $[0,1]$: tokens with weaker incoming gradients receive smaller auxiliary updates, and tokens with zero incoming gradient receive none. The coefficients $\lambda_{\mathrm{turn}}$ and $\lambda_{\mathrm{token}}$ set the overall auxiliary strength, while $\rho_{l,t}$ adjusts it across tokens within each turn. Thus, $\rho_{l,t}$ supports balancing the two training signals but does not by itself enforce equal gradient magnitudes. We compute it once before injection at each layer and share it between both controls.

Using the turn-level objective $J_{\mathrm{turn}}$ from Section~\ref{sec:turn-level-control} and the token-level objective $J_{\mathrm{token}}$ from Section~\ref{sec:token-level-control}, the modified backward rule is
\begin{equation}
\mathbf{g}_{l,t}^{\mathrm{out}}
= \mathbf{g}_{l,t}^{\mathrm{in}}
- g_s\,\rho_{l,t}\left(
\lambda_{\mathrm{turn}}\nabla_{\mathbf{z}_{l,t}}J_{\mathrm{turn}}
+\lambda_{\mathrm{token}}\nabla_{\mathbf{z}_{l,t}}J_{\mathrm{token}}
\right),
\label{eq:control-gradient-combination}
\end{equation}
where $g_s$ is the policy-entropy gate from Section~\ref{sec:entropy-gated-control}. The auxiliary derivatives alter the gradient directions through the full router distribution $\mathbf{p}_{l,t}$. Scaling factors, gates, and discrete expert targets are treated as constants during the backward pass.

We apply this rule at every MoE layer, separately within generated thinking and tool-use fields. A custom autograd operation returns $\mathbf{z}_{l,t}$ from the forward pass and adds the auxiliary gradients in the backward pass. The resulting gradient propagates through $\mathbf{z}_{l,t}=W_l\mathbf{h}_{l,t}$ by the chain rule, updating both the router weights and the representations right before the router. We summarize the actual computation process of our routing control framework in Algorithm~\ref{alg:routing_control}.

\begin{algorithm}[!h]
\caption{Details of Agentic RL with Routing Control}
\label{alg:routing_control}
\begin{algorithmic}[1]
\For{each training step}
    \State Generate trajectories and extract labels and field boundaries.
    \State Compute the RL loss.
    \For{each MoE layer during backpropagation}
        \State Collect routing statistics.
        \State Compute scaling factors from incoming gradients.
        \If{entropy gate is open}
            \State Apply token control to similar adjacent routes within fields.
            \State Apply turn control.
            \State Scale and add the auxiliary updates.
        \EndIf
        \State Continue backpropagation.
    \EndFor
    \State Update model parameters.
\EndFor
\end{algorithmic}
\end{algorithm}

\subsection{Rubrics for Grouping Agentic Operations}
\label{appendix:rubric}

As a key component of the proposed routing-control framework, our rubric for detecting operation labels is simple, deterministic, and task-dependent. We apply hard-coded rules to the auxiliary information in each turn's code field. A turn may contain multiple operations; therefore, its label is the sorted set of unique operation categories detected in that turn. The resulting turn-level labels are assigned to both the thinking and tool-use fields.

\subsubsection{AppWorld Rubrics}
For AppWorld~\citep{trivedi2024appworld}, we extract API calls of the form \texttt{apis.<app>.<api>} from the Python code generated at turn $t$. Let
\begin{equation}
    \mathcal{C}_t^{\mathrm{AW}}
    =
    \{(a_i,p_i)\}_{i=1}^{N_t},
\end{equation}
where $a_i$ and $p_i$ denote the application and API names of the $i$-th call, respectively. We map each call to one of five operation categories:
\begin{equation}
    \mathcal{O}_{\mathrm{AW}}
    =
    \{\textsc{Auth},\textsc{Read},\textsc{Create},
      \textsc{Update},\textsc{Delete}\}.
\end{equation}

The operation classifier follows the ordered rules
\begin{equation}
g_{\mathrm{AW}}(a,p)=
\begin{cases}
\textsc{Auth},
    & p \in \mathcal{P}_{\mathrm{auth}},\\
\textsc{Read},
    & \operatorname{prefix}(p) \in \mathcal{P}_{\mathrm{read}},\\
h\!\left(\mu(a,p)\right),
    & \mu(a,p)\ \text{is available},\\
q\!\left(\operatorname{prefix}(p)\right),
    & \text{otherwise},
\end{cases}
\end{equation}
where $\mathcal{P}_{\mathrm{auth}}$ contains authentication APIs such as
\texttt{login}, \texttt{logout}, and \texttt{signup};
$\mathcal{P}_{\mathrm{read}}$ contains read-oriented prefixes such as
\texttt{show}, \texttt{list}, \texttt{search}, \texttt{get}, and
\texttt{download}; and $\mu(a,p)$ returns the HTTP method associated with the API. The HTTP-method mapping is
\begin{equation}
\begin{aligned}
h(\texttt{GET})&=\textsc{Read}, &\qquad h(\texttt{POST})&=\textsc{Create}, \\
h(\texttt{PATCH/PUT})&=\textsc{Update}, &\qquad h(\texttt{DELETE})&=\textsc{Delete}.
\end{aligned}
\end{equation}
The read-prefix rule is applied before the HTTP-method rule because AppWorld implements several read-only, RPC-style APIs using \texttt{POST}. For example,
\texttt{apis.splitwise.show\_person\_balance(...)} is labeled
\textsc{Read}, despite being implemented as a \texttt{POST} endpoint.

The final multi-label annotation for turn $t$ is
\begin{equation}
    \operatorname{sort}\!\left(
    \operatorname{unique}
    \left\{
    g_{\mathrm{AW}}(a,p)
    \mid (a,p)\in\mathcal{C}_t^{\mathrm{AW}}
    \right\}\right).
\end{equation}
A turn may have multiple operation labels. For example, a turn containing both \texttt{create\_task(...)} and \texttt{delete\_task(...)} gets $\{\textsc{Create},\textsc{Delete}\}$.

\subsubsection{AutomationBench Rubrics}
For AutomationBench~\citep{shepard2026automationbench}, operation detection is performed directly on the structured tool calls generated at turn $t$. Let
\begin{equation}
    \mathcal{C}_t^{\mathrm{Auto}}
    =
    \{(n_i,x_i)\}_{i=1}^{N_t},
\end{equation}
where $n_i$ is the tool name and $x_i$ is its argument dictionary. We use the operation categories
\begin{equation}
    \mathcal{O}_{\mathrm{Auto}}
    =
    \{\textsc{Search},\textsc{Read},\textsc{Create},\textsc{Update},\textsc{Delete},\textsc{None}\}.
\end{equation}
Unlike AppWorld, AutomationBench does not require an external API-to-method lookup: the HTTP method is explicitly included in each \texttt{api\_fetch} call. We define
\begin{equation}
g_{\mathrm{Auto}}(n,x)=
\begin{cases}
\textsc{Search},
    & n=\texttt{api\_search},\\
\textsc{Read},
    & n=\texttt{api\_fetch},\
      x[\texttt{method}]\in\{\texttt{GET},\texttt{HEAD}\},\\
\textsc{Create},
    & n=\texttt{api\_fetch},\
      x[\texttt{method}]=\texttt{POST},\\
\textsc{Update},
    & n=\texttt{api\_fetch},\
      x[\texttt{method}]\in\{\texttt{PATCH},\texttt{PUT}\},\\
\textsc{Delete},
    & n=\texttt{api\_fetch},\
      x[\texttt{method}]=\texttt{DELETE},\\
\textsc{None},
    & \text{otherwise}.
\end{cases}
\end{equation}
Local formatting tools, such as \texttt{base64\_encode}, and malformed or unsupported tool calls are not assigned an operation label.

The turn-level annotation is consequently
\begin{equation}
    \operatorname{sort}\!\left(
    \operatorname{unique}
    \left\{
    g_{\mathrm{Auto}}(n,x)
    \mid (n,x)\in\mathcal{C}_t^{\mathrm{Auto}}
    \right\}\right).
\end{equation}
A turn may have multiple operation labels. For example, a turn containing an \texttt{api\_search} call, a \texttt{GET} request, and two \texttt{POST} requests receives $\{\textsc{Search},\textsc{Read},\textsc{Create}\}$.

\section{Additional Discussion}
In this section, we discuss a few interesting points about MoE routing control. These discussions complement our design choices in Section~\ref{sec:method}.

\subsection{Is It Possible to Manually Fix the Within-Field Routing?}
It is true that production inference engines do not support manually setting expert selections (Appendix~\ref{appendix:detail_moe_arch}), but it is still interesting to examine the effects of doing so. We use the naive inference function from HF Transformers~\citep{wolf-etal-2020-transformers} and manually set the expert selection to be exactly the field-wise consensus set (Eq.~\ref{eq:variant-consensus}).

The results are shown in Table~\ref{tab:fixed_routing_likelihood}. We compute mean log-likelihoods for a set of example AppWorld trajectories using default and fixed routing strategies. We observe that enforcing fixed expert selection substantially degrades model likelihoods. Replacing token-specific routing with each field's majority-selected experts reduces mean token log-probability. These results demonstrate that hard routing constraints can induce a collapse in likelihood.

\begin{table*}[!h]
\centering
\caption{Enforcing fixed expert routing substantially degrades trajectory likelihood. Generation log-likelihoods are computed from a set of example trajectories. Higher mean log-probability is better. $\Delta = 2\cdot(\ell_{\text{fixed}}-\ell_{\text{default}}) / (\ell_{\text{fixed}}+\ell_{\text{default}})$ is the relative decrease.}
\label{tab:fixed_routing_likelihood}
\begin{tabular}{lrrr}
\toprule
\textbf{Trajectory} & \multicolumn{3}{c}{\textbf{Generation Log-Likelihood}} \\
\cmidrule(lr){2-4}
& Default Routing & Fixed Routing & $\Delta$ (\%) \\
\midrule
positive ($R=1$) & -0.2778 & -1.5897 & 140.51 \\
negative ($R=0$) & -0.2795 & -1.5292 & 138.19 \\
\bottomrule
\end{tabular}
\end{table*}

\subsection{Is It Possible to Directly Control the Routers' Entropy?}
\label{appendix:router_entropy}

The stability of MoE RL is closely related to entropy, but the total action entropy from the token distributions is not enough to understand the MoE models' inference behavior. We also examine router entropy.

\paragraph{Entropy decomposition.}
For a fixed context $x$, consider a stochastic router that samples an expert $R\sim q_{\phi}(\cdot\mid x)$ and then the expert samples an action $A\sim\pi_{\theta}(\cdot\mid R,x)$. Here, $A\in\mathbb{V}$ is the next token. We distinguish three quantities: the total action entropy $H(A\mid x)$, the expected action entropy with the route fixed $\mathbb{E}_{r}H(A\mid R=r,x)$, and the router entropy $H(q_{\phi}(\cdot\mid x))$.

\begin{lemma}[Entropy Decomposition]
\label{lem:entropy_decompose}
The total action entropy satisfies
\begin{equation*}
    H(A\mid x)
    \leq \min\!\left\{
        \log|\mathbb{V}|,\;
        \mathbb{E}_{r\sim q_{\phi}(\cdot\mid x)}H(A\mid R=r,x)
        + H\!\left(q_{\phi}(\cdot\mid x)\right)
    \right\}.
\label{eq:router-entropy-bound}
\end{equation*}
\end{lemma}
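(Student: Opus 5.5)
The plan is to prove the two bounds inside the minimum separately; the statement then follows by taking the smaller of the two. Throughout, I fix the context $x$ and work with the joint distribution of $(R,A)$ given $x$, namely $P(r,a\mid x)=q_{\phi}(r\mid x)\,\pi_{\theta}(a\mid r,x)$. The marginal of this joint over $R$ is the router distribution $q_{\phi}(\cdot\mid x)$. The marginal over $A$ is the mixture $\sum_r q_{\phi}(r\mid x)\pi_{\theta}(\cdot\mid r,x)$, whose entropy is $H(A\mid x)$.

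\emph{First bound.} Since $A$ takes values in the finite vocabulary $\mathbb{V}$, the standard maximum-entropy inequality gives $H(A\mid x)\le\log|\mathbb{V}|$. One proof is to apply Jensen's inequality to the concave $\log$ in $H(A\mid x)=\mathbb{E}\log\frac{1}{P(A\mid x)}\le\log\mathbb{E}\frac{1}{P(A\mid x)}\le\log|\mathbb{V}|$. Equivalently, it follows from nonnegativity of the KL divergence to the uniform distribution on $\mathbb{V}$.

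\emph{Second bound.} I will use the chain rule for entropy applied to the pair $(R,A)$, expanded in both orders:
\begin{equation*}
H(R,A\mid x)=H(R\mid x)+H(A\mid R,x)=H(A\mid x)+H(R\mid A,x).
\end{equation*}
Here $H(R\mid x)=H(q_{\phi}(\cdot\mid x))$ by the marginal computation above. By definition of conditional entropy, $H(A\mid R,x)=\mathbb{E}_{r\sim q_{\phi}(\cdot\mid x)}H(A\mid R=r,x)$. Since the router index $R$ ranges over a finite set of experts, $H(R\mid A,x)\ge 0$. Rearranging gives
\begin{equation*}
H(A\mid x)=H(q_{\phi}(\cdot\mid x))+\mathbb{E}_{r}H(A\mid R=r,x)-H(R\mid A,x)\le \mathbb{E}_{r}H(A\mid R=r,x)+H(q_{\phi}(\cdot\mid x)).
\end{equation*}
An equivalent route is to write $H(A\mid x)=H(A\mid R,x)+I(A;R\mid x)$ and then use $I(A;R\mid x)\le H(R\mid x)$.

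There is no real obstacle. The only step needing care is the modeling convention: $R$ is treated as a single discrete route variable, for example a top-$k$ expert set treated as one symbol, so that the chain rule applies and the conditional entropy is nonnegative. I would state this convention explicitly before the computation. I would also note that equality in the second bound holds exactly when the route is recoverable from the action, i.e.\ when $H(R\mid A,x)=0$.
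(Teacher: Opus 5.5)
Your proposal is correct and matches the paper's proof: expand $H(R,A\mid x)$ by the chain rule in both orders, drop the nonnegative term $H(R\mid A,x)$, and take the minimum with the vocabulary bound $H(A\mid x)\le\log|\mathbb{V}|$. Your additional remarks are accurate details the paper leaves implicit, namely the Jensen/KL justification of the vocabulary bound, treating the route as a single discrete symbol, and the equality condition $H(R\mid A,x)=0$.
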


\noindent\emph{Proof.}
The chain rule gives
\begin{equation*}
    H(A\mid x) = \mathbb{E}_{r}H(A\mid R=r,x) + H(R\mid x) - H(R\mid A,x).
\label{eq:router-entropy-decomposition}
\end{equation*}
Since $H(R\mid A,x)\geq 0$, we can upper-bound the total action entropy by
\begin{equation*}
\begin{aligned}
    H(A\mid x) &\leq \mathbb{E}_{r}H(A\mid R=r,x) + H(R\mid x) \\
    &= \mathbb{E}_{r}H(A\mid R=r,x) + H(q_{\phi}(\cdot\mid x)).
\end{aligned}
\end{equation*}
Given the numerical entropy upper bound $\log|\mathbb{V}|$, the inequality holds. \hfill$\square$

\paragraph{Failure cases of router entropy regularization.}
Fig.~\ref{fig:router-entropy-control} compares training with and without direct regularization of router softmax entropy. The router entropy loss is similar to the standard total action entropy loss, and we set the coefficients to $\pm 10^{-3}$. We observe training collapse when perturbing the routers' entropy. Small router-score changes near the top-$k$ boundary can replace active experts and disrupt subsequent hidden representations, which explains the explosion in total action entropy and training collapse.

\begin{figure}[!h]
    \centering
    \includegraphics[width=\linewidth]{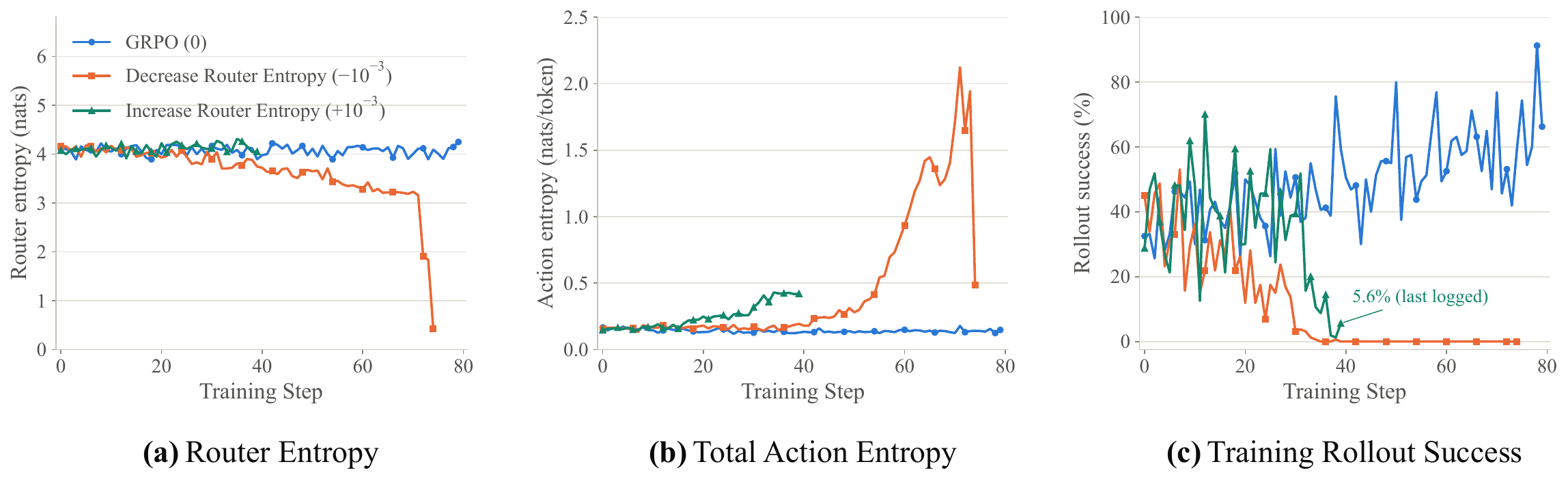}
    \caption{Training curves for GRPO with direct router-entropy regularization. Direct entropy control causes an explosion in total action entropy and training collapse.}
    \label{fig:router-entropy-control}
\end{figure}

\subsection{Why Does Routing Consistency Improve Efficiency?}
\label{appendix:efficiency_reason}

As demonstrated in Section~\ref{sec:exp-efficiency}, encouraging field-wise routing consistency improves the efficiency of MoE models. However, this effect has not been fully understood yet. We propose a controlled analysis here to reveal the mechanism behind this efficiency improvement.

As inference engines such as vLLM~\citep{kwon2023efficient} and SGLang~\citep{zheng2024sglang} encapsulate routing in fused kernels, it is technically hard to manually set the expert selection during inference. Instead, we propose a simple weight matrix mixing trick to reduce the rank of the router's weight matrices. This low-rank mixing can effectively reduce the expert shifts. Specifically, we denote the router's weight matrix as $\mathbf{W} \in \mathbb{R}^{E \times H}$ and the hidden states as $\mathbf{h} \in \mathbb{R}^H$. The router's logit for expert $e$ is computed as:
\begin{equation}
    \text{logit}[e] = \mathbf{W}[e] \cdot \mathbf{h}.
\end{equation}
To manually reduce expert shifts when decoding consecutive tokens, we aim to restrict expert selection to a smaller set. For example, we can force the router logits to be exactly the same for all experts, and then only the first 8 experts will be used. This approach is equivalent to reducing the rank of the routers' weight matrices. Specifically, we categorize experts into $G$ groups according to their indices:
\begin{equation}
    \underbrace{e_1, e_2, ..., e_{K}}_{\text{Group}~1}, \underbrace{e_{K + 1}, e_{K + 2}, ..., e_{2K}}_{\text{Group}~2}, ..., \underbrace{e_{(G-1)K + 1}, e_{(G-1)K + 2}, ..., e_{GK}}_{\text{Group}~G}.
\end{equation}
Then, for each group, its corresponding rows in the low-rank weight matrix are:
\begin{equation}
    \mathbf{W}'[e] = \frac{1}{K}\sum_{e'\in \text{group}(e)} \mathbf{W}[e'],
\end{equation}
where $\text{group}(e)$ is the group to which expert $e$ belongs. This makes the rows within each group exactly the same, fixing any expert selection within a group to be its first expert. Then the new logits are computed as:
\begin{equation}
    \text{logit}'[e] = \mathbf{W}'[e] \cdot \mathbf{h}.
\end{equation}
By varying $G$, we can manually control the number of shifted experts.

Additionally, for small $G$, the number of experts in each group exceeds that in each EP rank, and thus only a portion of GPUs are utilized during inference. We address this bug by re-ranking experts within each bucket defined by group and EP rank:
\begin{equation}
    \mathbf{W}'[e] = 0.5^{\text{tier}(e)} \cdot \frac{1}{K}\sum_{e'\in \text{group}(e)} \mathbf{W}[e'],
\end{equation}
where $\text{tier}(e)$ is the order of expert $e$ in its bucket defined by group and EP rank. Even with $G=1$ (fixed 8 experts), we can now guarantee that all GPUs are utilized.

Based on the above router-collapsing procedure, we compare different routing consistency levels across various batch sizes and expert parallelism (EP) settings~\citep{lepikhingshard}. The results are shown in Fig.~\ref{fig:ep_comparison_shifted}. First, whether EP is enabled does not influence inference efficiency. More importantly, the efficiency gain from more consistent routing benefits only inference with batch sizes larger than 1.

These observations support a batching-based explanation. For a decoding batch containing $B$ tokens with top-$k$ routing, the total number of token–expert assignments remains $Bk$. If these assignments are distributed across fewer active experts, the average number of tokens per active expert increases. This can improve expert-weight reuse and the efficiency of grouped expert computation. At batch size 1, each selected expert receives only one token, so this batching opportunity is absent. The observed batch-size dependence is consistent with this mechanism.

Our intervention is a synthetic diagnostic rather than a quality-preserving inference configuration. We therefore interpret Fig.~\ref{fig:ep_comparison_shifted} as evidence that routing structure can improve batched inference efficiency, which is exactly the real evaluation setting in our experiments.

\begin{figure}[!h]
    \centering
    \includegraphics[width=\linewidth]{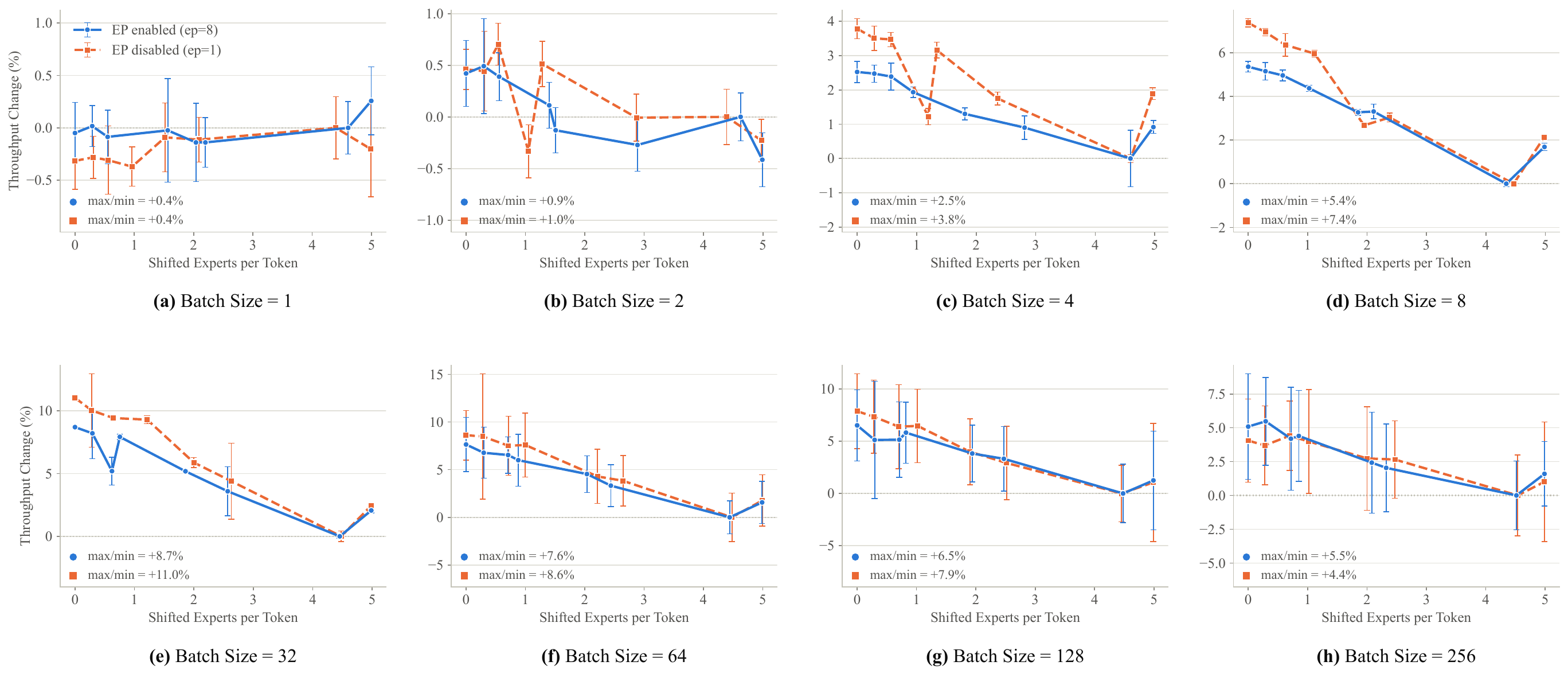}
    \caption{Controlled routing versus inference throughput. We manually collapse the routers' weight matrices to ensure consistent expert selection. The efficiency gain from consistent routing primarily comes from parallelism across a batch of trajectories decoded simultaneously.}
    \label{fig:ep_comparison_shifted}
\end{figure}

\section{Additional Experiments}
In this section, we present the full experimental results to complement Section~\ref{sec:exp} and help explain the effectiveness of each component in our framework as well as other applicable techniques.

\subsection{Ablation Study on Turn Labels}
As in Fig.~\ref{fig:method_pre_analysis}(a), we also compare the full training curves with turn-level routing control based on different labeling schemes in Table~\ref{tab:appworld-label-schemes}. In addition to the operation labels used in the main experiments, we also compare fully random labels and application-name labels. We observe that operation labels provide the best labeling scheme for improving agentic RL performance.

\begin{table}[!h]
\centering
\caption{AppWorld TGC (\%) at training step 199 for \texttt{Qwen3-30B-A3B-Instruct-2507} trained with GRPO and turn-level routing control ($\lambda_{\mathrm{MI}}=2\times10^{-6}$). All three runs share the same saved training settings apart from the labeling schemes. Labeling each turn by its operations yields the best performance.}
\label{tab:appworld-label-schemes}
\begin{tabular}{lcrr}
\toprule
\textbf{Turn Label} & \textbf{\# Classes} & \textbf{test-normal} & \textbf{test-challenge} \\
\midrule
Random           & 5  & 66.67 & 39.33 \\
Application Name & 12 & 66.07 & 40.53 \\
Operation   & 5  & \textbf{70.83} & \textbf{42.69} \\
\bottomrule
\end{tabular}
\end{table}

\subsection{Histogram of the Number of Shifted Experts}
We visualize the histogram of the actual number of expert shifts in Fig.~\ref{fig:expert_shift_histograms}. We observe that, with appropriate routing controls, RL training effectively reduces expert shifts, and more tokens' expert selections are consistent with those of the preceding tokens.

\begin{figure}[!h]
    \centering
    \includegraphics[width=\linewidth]{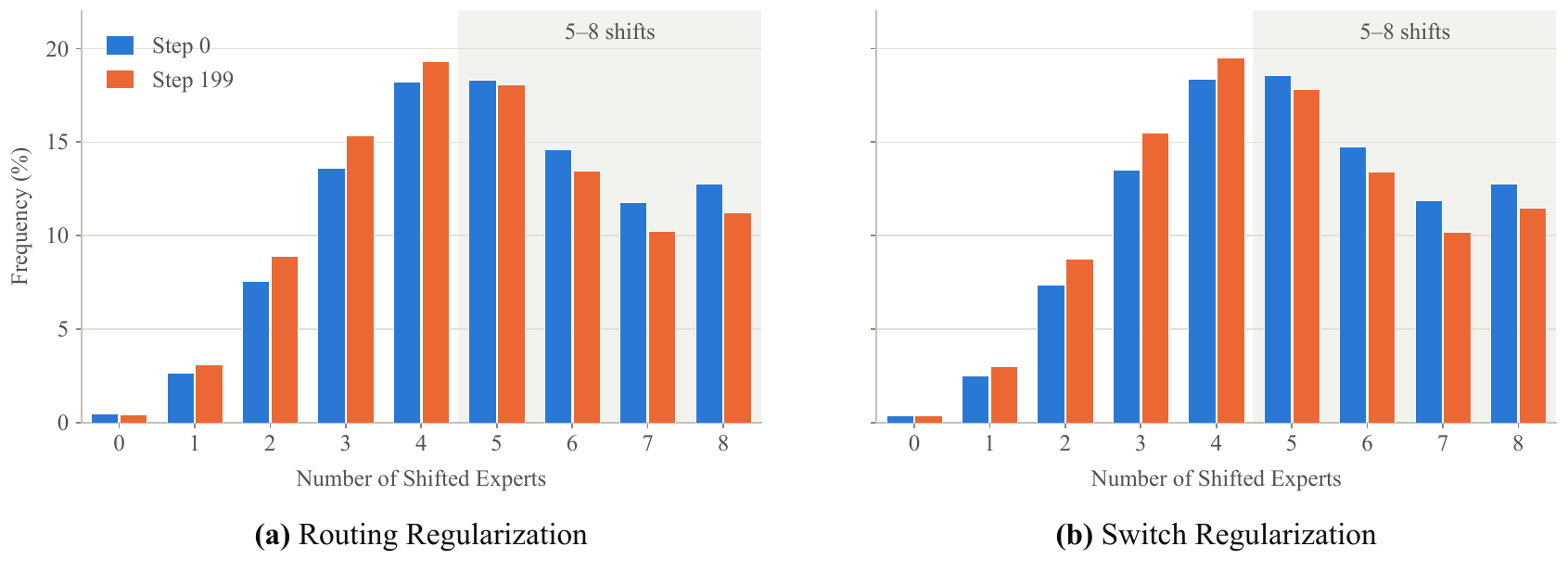}
    \caption{Histogram of expert shifts. After RL training with \texttt{routing\_reg} (Appendix~\ref{appendix:routing-reg}) and \texttt{switch\_reg} (Appendix~\ref{appendix:switch-reg}), high expert-shift counts consistently become less frequent.}
    \label{fig:expert_shift_histograms}
\end{figure}

\subsection{Effect of R3 on Agentic RL}
Rollout Routing Replay (R3)~\citep{ma2026stabilizing} is designed to enforce alignment between inference and training routing and prevent unintended routing drift. We evaluate its effectiveness for agentic RL tasks in our experiments. The results are shown in Fig.~\ref{fig:r3_curves}. Although removing R3 yields slightly better performance, it introduces instability into RL training. For example, without R3, the entropy curve may change sharply with a risk of explosion. Therefore, for better-controlled experiments, R3 is always applied our main experiments.

\begin{figure}[!h]
    \centering
    \includegraphics[width=\linewidth]{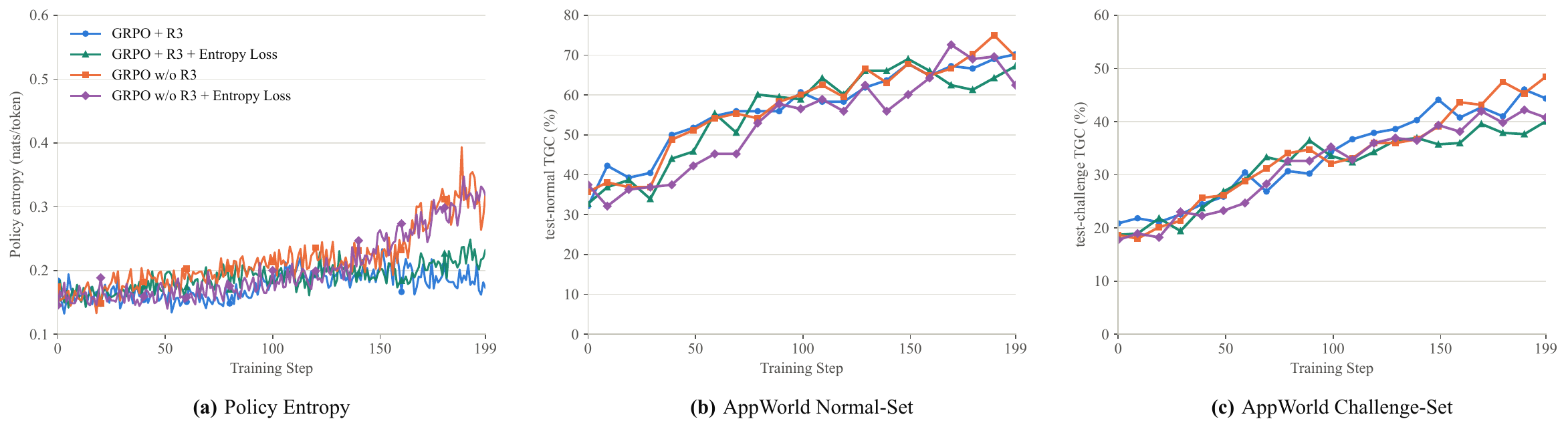}
    \caption{Comparison of training curves with R3. On agentic tasks, R3 can stabilize the entropy dynamics, preventing potential explosion. However, this stability does not translate to better performance due to the loss of RL exploration.}
    \label{fig:r3_curves}
\end{figure}

\subsection{Ablation Study on Entropy Loss}
As we observe that the inference efficiency improvement from our routing control framework often comes with higher total action entropy, we also investigate whether entropy alone can improve efficiency. In Fig.~\ref{fig:entropy_loss_curves}, we directly apply entropy loss to agentic RL and observe that changes in total action entropy do not translate into changes in routing behavior and have no observable effect on inference efficiency.

\begin{figure}[!h]
    \centering
    \includegraphics[width=\linewidth]{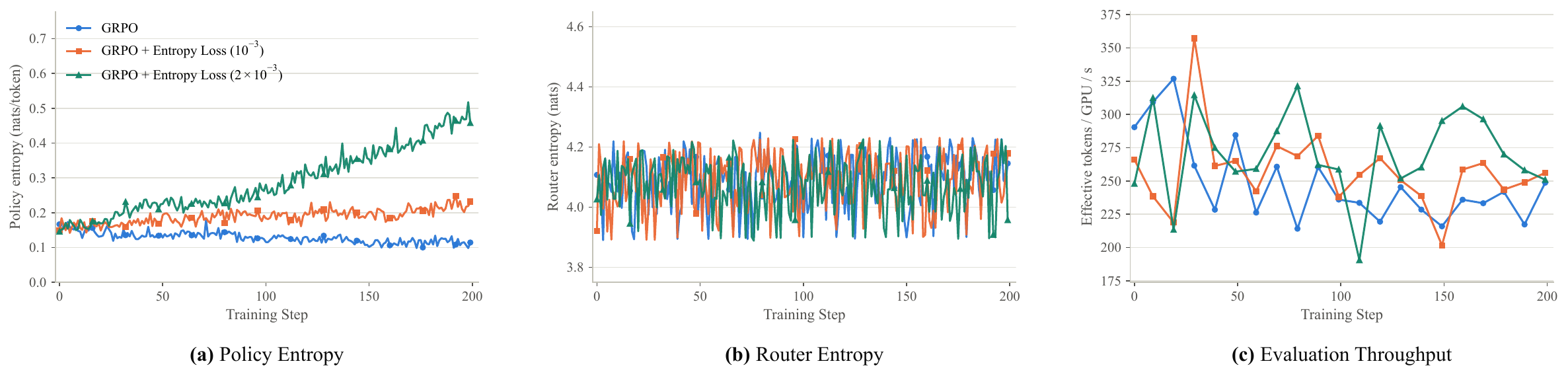}
    \caption{Comparison of training curves with entropy loss. We confirm that entropy loss has no effect on the routers' behavior or inference efficiency.}
    \label{fig:entropy_loss_curves}
\end{figure}

\subsection{Ablation Study on the Token-Level Control Threshold}
To determine the best threshold $h_{\delta}$ as used in Eq.~\ref{eq:token-loss}, we compare different $h_{\delta}$ values in Table~\ref{tab:binary-threshold}. We observe that $h_{\delta} = 0.5$ is the best value. Smaller thresholds reduce our framework to standard MoE-agnostic RL; larger thresholds significantly destabilize RL training.

\begin{table}[!h]
\centering
\caption{Token-level routing control gap thresholds, evaluated using AppWorld TGC (\%) with GRPO. Allowing too few expert shifts reduces our framework to standard RL; allowing too many expert shifts significantly destabilizes RL training.}
\label{tab:binary-threshold}
\begin{tabular}{lrrrrr}
\toprule
\textbf{Threshold} & \textbf{Eval Step} & \textbf{test-normal} & \textbf{test-challenge} & \textbf{Collapse Step} \\
\midrule
$0/8$         & 199 & 64.88 & 35.97 & -- \\ 
$2/8$         & 199 & 61.31 & 39.09 & -- \\ 
$4/8$         & 199 & 66.67 & 42.21 & -- \\ 
$6/8$         & 159 &  0.00 &  0.00 & 149 \\ 
$8/8$         & 139 &  0.00 &  0.00 & 129 \\ 
\bottomrule
\end{tabular}
\end{table}

\subsection{Hyperparameter Magnitude Search}
To empirically determine the best coefficient magnitudes for turn-level and token-level control, we compare a range of coefficients in Tables~\ref{tab:op_mi_hyperparameters} and~\ref{tab:binary_coefficient}, respectively. For both control, we observe optimal magnitude ranges for achieving the best performance and avoiding training collapse. Larger magnitudes require the entropy gate introduced in Section~\ref{sec:entropy-gated-control} to prevent unstable router updates.

\begin{table}[!h]
\centering
\caption{Hyperparameter search for turn-level routing control. We apply only turn-level control to GRPO to determine the appropriate magnitude.}
\label{tab:op_mi_hyperparameters}
\begin{tabular}{lrrrrr}
\toprule
Setting & test-normal & test-challenge
& Entropy & Tokens/GPU/s & Shifted Experts \\
& TGC (\%) & TGC (\%) & & & \\
\midrule
$\lambda_{\text{turn}}=0$ (GRPO)
& 66.67 & 38.85 & 0.1842 & 251.13 & 4.8393 \\
\midrule
$\lambda_{\text{turn}}=10^{-3}$
& 0.00 & 0.00 & 0.2235 & 276.37 & 4.6192 \\
$\lambda_{\text{turn}}=10^{-4}$
& 0.60 & 0.96 & 0.1736 & 223.55 & 4.7340 \\
$\lambda_{\text{turn}}=10^{-5}$
& 63.69 & 36.93 & 0.1801 & 277.63 & 4.8370 \\
$\lambda_{\text{turn}}=10^{-6}$
& 63.69 & 41.73 & 0.1926 & 290.86 & 4.8425 \\
$\lambda_{\text{turn}}=10^{-7}$
& 68.45 & 41.49 & 0.1769 & 278.06 & 4.8247 \\
\bottomrule
\end{tabular}
\end{table}

\begin{table}[!h]
\centering
\caption{Hyperparameter search for token-level routing control. We apply only token-level control to GRPO to determine the appropriate magnitude. TGC is reported at step 199 when available and at the last evaluation otherwise: $\dagger$ denotes step 59 and $\ddagger$ denotes step 79.}
\label{tab:binary_coefficient}
\begin{tabular}{lrrrrr}
\toprule
Setting & test-normal & test-challenge
& Entropy & Tokens/GPU/s & Shifted Experts \\
& TGC (\%) & TGC (\%) & & & \\
\midrule
$\lambda_{\text{token}}=0$ (GRPO)
& 64.88 & 35.97 & 0.1628 & 305.34 & 4.8263 \\
\midrule
$\lambda_{\text{token}}=10^{-4}$\textsuperscript{$\dagger$}
& 0.00 & 0.00 & 0.2649 & 394.55 & 4.5584 \\
$\lambda_{\text{token}}=10^{-5}$\textsuperscript{$\dagger$}
& 0.00 & 0.00 & 0.2639 & 364.24 & 4.6042 \\
$\lambda_{\text{token}}=10^{-6}$\textsuperscript{$\ddagger$}
& 0.00 & 0.00 & 0.2010 & 295.37 & 4.7318 \\
$\lambda_{\text{token}}=10^{-7}$
& 66.67 & 42.21 & 0.1448 & 306.03 & 4.8191 \\
\bottomrule
\end{tabular}
\end{table}

\subsection{Ablation Study on Routing Control Components}
We compare the effect of each individual components of our routing control framwork in Table~\ref{tab:control_ablation}, evaluating the last checkpoints of 200-step training. Turn-level control contributes more to the task performance, and token-level control contributes more to the inference efficiency.

\begin{table*}[!h]
\centering
\setlength\tabcolsep{5pt}
\small
\caption{Individual and combined effects of routing controls on AppWorld. Turn-level control primarily improves model capability on hard tasks, while token-level control primarily improves inference throughput.}
\label{tab:control_ablation}
\begin{tabular}{lrrrrr}
\toprule
\textbf{Setting} & \textbf{test-normal} & \textbf{test-challenge}
& \textbf{Entropy} & \textbf{Tokens/GPU/s} & \textbf{Shifted Experts} \\
& TGC (\%) & TGC (\%) & & & \\
\midrule
GRPO
& 66.67 & 38.85 & 0.1842 & 251.13 & 4.8393 \\
~~~~+ Turn-Level Control
& 62.50 & 42.45 & 0.1943 & \textbf{313.52} & 4.8132 \\
~~~~+ Token-Level Control
& 67.86 & 41.73 & 0.2053 & 277.74 & \textbf{4.6676} \\
~~~~+ Both
& \textbf{75.00} & \textbf{49.20} & \textbf{0.2331} & 312.94 & 4.7416 \\
\bottomrule
\end{tabular}
\end{table*}

\subsection{Ablation Study on Rollout Efficiency}
We also compare different measurement of rollout efficiency in Fig.~\ref{fig:efficiency-ablation}, where we compare the actual rollout time. The conclusion persists that our routing control framework, especially the token-level control, significantly improves the rollout efficiency.

\begin{figure}[!h]
    \centering
    \includegraphics[width=\textwidth]{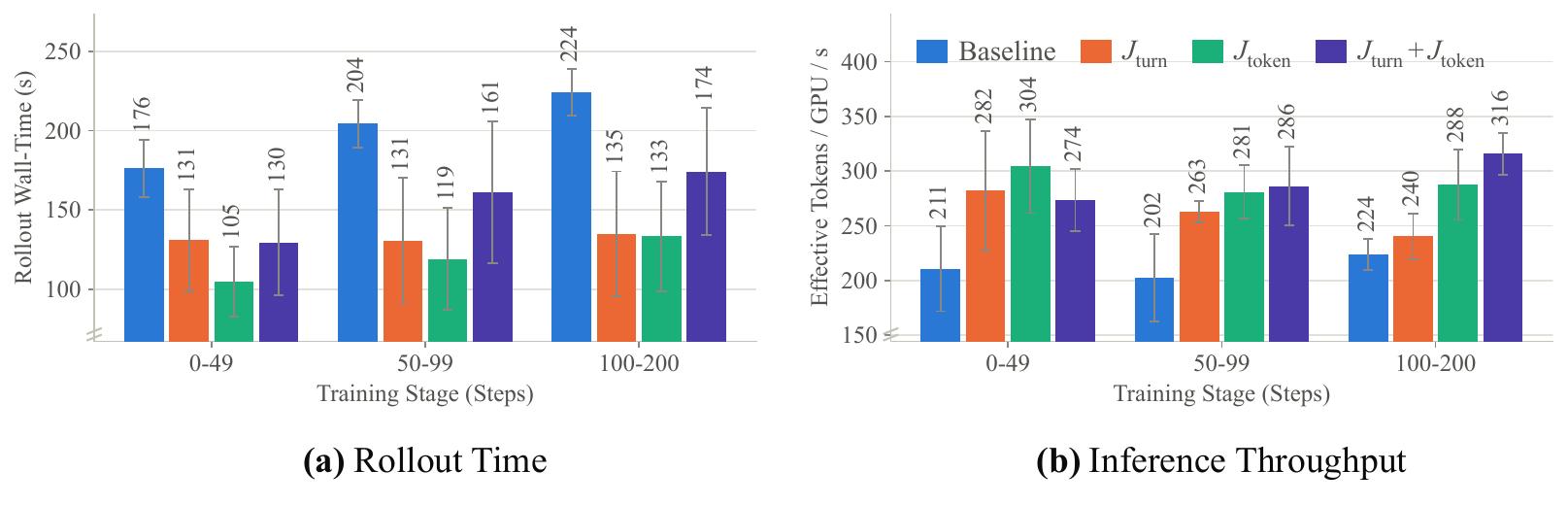}
    \caption{AppWorld rollout efficiency comparison under GRPO with turn-level control, token-level control, or both. (a) Both controls reduce the rollout time for a single trajectory. (b) Both controls improve inference throughput. The token-level control contributes the most to inference efficiency at the early training stage. The reported numbers are averaged over all of the evaluation trajectories collected from the specified training step intervals. The error bars are the standard deviation of all data points.}
    \label{fig:efficiency-ablation}
\end{figure}

\section{Agentic Trajectory Examples}
\definecolor{awtrajblue}{HTML}{3285EC}
\definecolor{awtrajorange}{HTML}{E6A000}
\definecolor{awtrajcode}{HTML}{F4F4F4}
\definecolor{autotrajblue}{HTML}{3285EC}
\definecolor{autotrajorange}{HTML}{E6A000}
\definecolor{autotrajgray}{HTML}{F4F4F4}

\tcbset{awtrajbase/.style={
  enhanced, breakable, boxrule=0.6pt, arc=1.5mm,
  colback=white, coltitle=white, fonttitle=\bfseries\small,
  fontupper=\footnotesize, left=2mm, right=2mm, top=1.5mm, bottom=1.5mm,
  before skip=5pt, after skip=5pt,
  pad at break*=1.5mm,
}}
\newtcolorbox{awtrajagent}[2]{awtrajbase,
  colframe=awtrajblue, colbacktitle=awtrajblue,
  title={Turn #1/20 \quad #2 \hfill Agent},
  title after break={Turn #1/20 \quad #2 \hfill Agent (continued)},
}
\newtcolorbox{awtrajenv}[2]{awtrajbase,
  colframe=awtrajorange, colbacktitle=awtrajorange,
  left skip=3mm,
  title={Turn #1/20 \hfill Environment#2},
  title after break={Turn #1/20 \hfill Environment#2 (continued)},
}
\lstdefinestyle{awtrajcode}{
  language=Python, basicstyle=\ttfamily\scriptsize,
  keywordstyle=\ttfamily, commentstyle=\ttfamily, stringstyle=\ttfamily,
  backgroundcolor=\color{awtrajcode},
  columns=fullflexible, keepspaces=true, showstringspaces=false,
  breaklines=true, breakatwhitespace=false, breakindent=1em,
  tabsize=4, aboveskip=3pt, belowskip=0pt,
  frame=single, rulecolor=\color{black!12}, framerule=0.2pt,
  framesep=3pt, xleftmargin=3pt, xrightmargin=3pt,
}
\lstdefinestyle{awtrajoutput}{
  language={}, basicstyle=\ttfamily\scriptsize,
  columns=fullflexible, keepspaces=true, showstringspaces=false,
  breaklines=true, breakatwhitespace=false, breakindent=1em,
  aboveskip=0pt, belowskip=0pt, frame=none,
}

\tcbset{autotrajbase/.style={
  enhanced, breakable, boxrule=0.6pt, arc=1.5mm,
  colback=white, coltitle=white, fonttitle=\bfseries\small,
  fontupper=\footnotesize, left=2mm, right=2mm, top=1.5mm, bottom=1.5mm,
  before skip=5pt, after skip=5pt, pad at break*=1.5mm,
}}
\newtcolorbox{autotrajagent}[2]{autotrajbase,
  colframe=autotrajblue, colbacktitle=autotrajblue,
  title={Turn #1/17 \quad #2 \hfill Agent},
  title after break={Turn #1/17 \quad #2 \hfill Agent (continued)},
}
\newtcolorbox{autotrajenv}[2]{autotrajbase,
  colframe=autotrajorange, colbacktitle=autotrajorange, left skip=3mm,
  title={Turn #1/17 \hfill Environment#2},
  title after break={Turn #1/17 \hfill Environment#2 (continued)},
}
\lstdefinestyle{autotrajcode}{
  language={}, basicstyle=\ttfamily\scriptsize,
  backgroundcolor=\color{autotrajgray},
  columns=fullflexible, keepspaces=true, showstringspaces=false,
  breaklines=true, breakatwhitespace=false, breakindent=1em,
  tabsize=4, aboveskip=3pt, belowskip=3pt,
  frame=single, rulecolor=\color{black!12}, framerule=0.2pt,
  framesep=3pt, xleftmargin=3pt, xrightmargin=3pt,
}
\lstdefinestyle{autotrajoutput}{
  language={}, basicstyle=\ttfamily\scriptsize,
  columns=fullflexible, keepspaces=true, showstringspaces=false,
  breaklines=true, breakatwhitespace=false, breakindent=1em,
  aboveskip=3pt, belowskip=3pt, frame=none,
}

\subsection{AppWorld  Trajectory}
\label{appendix:appworld_trajectory}

We show a successful AppWorld trajectory with all 20 interaction turns.
Each blue box contains the agent's thinking and tool-use fields; the orange
box below it contains the corresponding environment feedback. Agent text and
Python code are retained, with Markdown delimiters and control tokens removed.
Long environment outputs are excerpted, with omissions marked explicitly;
passwords and access tokens are replaced by placeholders.

\paragraph{Task summary.}
Like all Venmo transactions from today involving any of the user's roommates
on their Venmo social feed. The environment date is \texttt{2023-05-18}.

\begin{center}
\small
\begin{tabular}{ll@{\qquad}ll}
\textbf{Task ID} & \texttt{2a163ab\_1} & \textbf{Sample index} & 21 \\
\textbf{Difficulty} & 2 & \textbf{Turns} & 20 \\
\textbf{Reward} & 1.0 & \textbf{Tests passed} & 6/6 \\
\end{tabular}
\end{center}

\paragraph{Operation labels.}
We label each turn using the API calls in its code field, following
Appendix~C.6.1, and assign the same label set to both generated fields.
Documentation queries are \texttt{Read}, including queries about login or
transaction-liking APIs. Login calls are \texttt{Auth}.
The \texttt{like\_transaction} and \texttt{complete\_task} endpoints use
\texttt{POST} and are therefore \texttt{Create} under the HTTP-method rule.
Turn~16 performs only local computation and has the empty label set
$\emptyset$. Labels are determined by the API calls, including calls that
return an error.

\paragraph{Outcome.}
The agent identifies three roommates, retrieves 50 social-feed transactions,
and selects four transactions matching the date and participants.
Although its string-based success counter prints zero in Turn~18, a repeated
like request in Turn~19 reports that the transaction has already been liked.
The final evaluation records successful completion with all six tests passed.
We retain this discrepancy between the agent's interpretation and the
environment state in the transcript.

\begin{awtrajagent}{1}{$\{\mathtt{Read}\}$}
\textbf{Thinking.} First, let's check what apps are available to us.

\textbf{Tool use.}
\begin{lstlisting}[style=awtrajcode]
print(apis.api_docs.show_app_descriptions())
\end{lstlisting}
\end{awtrajagent}

\begin{awtrajenv}{1}{ (excerpt)}
\begin{lstlisting}[style=awtrajoutput]
[
  {"name": "api_docs", "description": "An app to search and explore API documentation."},
  {"name": "supervisor", "description": "An app to access supervisor's personal information, account credentials, addresses, payment cards, and manage the assigned task."},
  {"name": "phone", "description": "An app to find and manage contact information for friends, family members, etc., send and receive messages, and manage alarms."},
  {"name": "venmo", "description": "A social payment app to send, receive and request money to and from others."},
  ... [other app descriptions omitted]
]
\end{lstlisting}
\end{awtrajenv}

\begin{awtrajagent}{2}{$\{\mathtt{Read}\}$}
\textbf{Thinking.} Now, let's check the available APIs for the \texttt{venmo} app.

\textbf{Tool use.}
\begin{lstlisting}[style=awtrajcode]
print(apis.api_docs.show_api_descriptions(app_name='venmo'))
\end{lstlisting}
\end{awtrajagent}

\begin{awtrajenv}{2}{ (excerpt)}
\begin{lstlisting}[style=awtrajoutput]
[
  {"name": "login", "description": "Login to your account."},
  {"name": "like_transaction", "description": "Like a transaction."},
  {"name": "show_social_feed", "description": "Show your social feed (transactions of your friends)."},
  ... [other API descriptions omitted]
]
\end{lstlisting}
\end{awtrajenv}

\begin{awtrajagent}{3}{$\{\mathtt{Read}\}$}
\textbf{Thinking.} We need to like all the venmo transactions from today involving any of my roommates on my venmo social feed. First, let's get the list of my roommates using the \texttt{phone} app.

\textbf{Tool use.}
\begin{lstlisting}[style=awtrajcode]
print(apis.api_docs.show_api_descriptions(app_name='phone'))
\end{lstlisting}
\end{awtrajagent}

\begin{awtrajenv}{3}{ (excerpt)}
\begin{lstlisting}[style=awtrajoutput]
[
  {"name": "login", "description": "Login to your account."},
  {"name": "show_contact_relationships", "description": "Get a list of all relationships available in your contact book."},
  {"name": "search_contacts", "description": "Search your contact book for relatives' information."},
  ... [other API descriptions omitted]
]
\end{lstlisting}
\end{awtrajenv}

\begin{awtrajagent}{4}{$\{\mathtt{Read}\}$}
\textbf{Thinking.} Now, let's check the \texttt{show\_contact\_relationships} API to get the list of my roommates.

\textbf{Tool use.}
\begin{lstlisting}[style=awtrajcode]
print(apis.api_docs.show_api_doc(app_name='phone', api_name='show_contact_relationships'))
\end{lstlisting}
\end{awtrajagent}

\begin{awtrajenv}{4}{ (excerpt)}
\begin{lstlisting}[style=awtrajoutput]
{
  "app_name": "phone",
  "api_name": "show_contact_relationships",
  "path": "/phone/contact_relationships",
  "method": "GET",
  "description": "Get a list of all relationships available in your contact book.",
  ... [parameters and response schemas omitted]
}
\end{lstlisting}
\end{awtrajenv}

\begin{awtrajagent}{5}{$\{\mathtt{Read}\}$}
\textbf{Thinking.} We need to log in to the phone app first to get the access token. Let's check the login API.

\textbf{Tool use.}
\begin{lstlisting}[style=awtrajcode]
print(apis.api_docs.show_api_doc(app_name='phone', api_name='login'))
\end{lstlisting}
\end{awtrajagent}

\begin{awtrajenv}{5}{ (excerpt)}
\begin{lstlisting}[style=awtrajoutput]
{
  "app_name": "phone",
  "api_name": "login",
  "path": "/phone/auth/token",
  "method": "POST",
  "description": "Login to your account.",
  ... [parameters and response schemas omitted]
}
\end{lstlisting}
\end{awtrajenv}

\begin{awtrajagent}{6}{$\{\mathtt{Read}\}$}
\textbf{Thinking.} We need the password for the phone app. Let's use the \texttt{supervisor} app to get it.

\textbf{Tool use.}
\begin{lstlisting}[style=awtrajcode]
passwords = apis.supervisor.show_account_passwords()
print(passwords)
\end{lstlisting}
\end{awtrajagent}

\begin{awtrajenv}{6}{ (excerpt)}
\begin{lstlisting}[style=awtrajoutput]
[
  ... [other account entries omitted]
  {"account_name": "phone", "password": "[omitted]"},
  {"account_name": "venmo", "password": "[omitted]"}
]
\end{lstlisting}
\end{awtrajenv}

\begin{awtrajagent}{7}{$\{\mathtt{Auth}\}$}
\textbf{Thinking.} Now we have the password for the phone app. Let's log in.

\textbf{Tool use.}
\begin{lstlisting}[style=awtrajcode]
phone_password = [p["password"] for p in passwords if p["account_name"] == "phone"][0]
login_result = apis.phone.login(username="3383946795", password=phone_password)
access_token = login_result["access_token"]
print(access_token)
\end{lstlisting}
\end{awtrajagent}

\begin{awtrajenv}{7}{ (excerpt)}
\begin{lstlisting}[style=awtrajoutput]
[phone access token omitted]
\end{lstlisting}
\end{awtrajenv}

\begin{awtrajagent}{8}{$\{\mathtt{Read}\}$}
\textbf{Thinking.} Now that we have the access token, let's get the list of contact relationships to find my roommates.

\textbf{Tool use.}
\begin{lstlisting}[style=awtrajcode]
contact_relationships = apis.phone.show_contact_relationships(access_token=access_token)
print(contact_relationships)
\end{lstlisting}
\end{awtrajagent}

\begin{awtrajenv}{8}{}
\begin{lstlisting}[style=awtrajoutput]
["coworker", "father", "friend", "manager", "mother", "parent", "roommate", "sibling", "sister"]
\end{lstlisting}
\end{awtrajenv}

\begin{awtrajagent}{9}{$\{\mathtt{Read}\}$}
\textbf{Thinking.} The relationship "roommate" is present. Now, let's get the list of roommates using the \texttt{search\_contacts} API.

\textbf{Tool use.}
\begin{lstlisting}[style=awtrajcode]
print(apis.api_docs.show_api_doc(app_name='phone', api_name='search_contacts'))
\end{lstlisting}
\end{awtrajagent}

\begin{awtrajenv}{9}{ (excerpt)}
\begin{lstlisting}[style=awtrajoutput]
{
  "app_name": "phone",
  "api_name": "search_contacts",
  "path": "/phone/contacts",
  "method": "GET",
  "description": "Search your contact book for relatives' information.",
  ... [parameters and response schemas omitted]
}
\end{lstlisting}
\end{awtrajenv}

\begin{awtrajagent}{10}{$\{\mathtt{Read}\}$}
\textbf{Thinking.} Let's search for contacts with the relationship "roommate".

\textbf{Tool use.}
\begin{lstlisting}[style=awtrajcode]
roommates = apis.phone.search_contacts(access_token=access_token, relationship="roommate")
print(roommates)
\end{lstlisting}
\end{awtrajagent}

\begin{awtrajenv}{10}{ (excerpt)}
\begin{lstlisting}[style=awtrajoutput]
[
  {
    "contact_id": 1660,
    "first_name": "Anthony",
    "last_name": "Harrison",
    "email": "an-harrison@gmail.com",
    "relationships": [
      "roommate"
    ],
    ... [other contact fields omitted]
  },
  {
    "contact_id": 1659,
    "first_name": "Anita",
    "last_name": "Burch",
    "email": "anita.burch@gmail.com",
    "relationships": [
      "roommate"
    ],
    ... [other contact fields omitted]
  },
  {
    "contact_id": 1661,
    "first_name": "Nicholas",
    "last_name": "Weber",
    "email": "nicholas.weber@gmail.com",
    "relationships": [
      "roommate"
    ],
    ... [other contact fields omitted]
  }
]
\end{lstlisting}
\end{awtrajenv}

\begin{awtrajagent}{11}{$\{\mathtt{Read}\}$}
\textbf{Thinking.} Now we have the list of roommates. Next, we need to log in to the Venmo app to access the social feed.

Let's check the login API for Venmo.

\textbf{Tool use.}
\begin{lstlisting}[style=awtrajcode]
print(apis.api_docs.show_api_doc(app_name='venmo', api_name='login'))
\end{lstlisting}
\end{awtrajagent}

\begin{awtrajenv}{11}{ (excerpt)}
\begin{lstlisting}[style=awtrajoutput]
{
  "app_name": "venmo",
  "api_name": "login",
  "path": "/venmo/auth/token",
  "method": "POST",
  "description": "Login to your account.",
  ... [parameters and response schemas omitted]
}
\end{lstlisting}
\end{awtrajenv}

\begin{awtrajagent}{12}{$\{\mathtt{Auth}\}$}
\textbf{Thinking.} We have the email \texttt{mel.bailey@gmail.com} and we already retrieved the Venmo password from the supervisor app earlier. Let's log in to Venmo.

\textbf{Tool use.}
\begin{lstlisting}[style=awtrajcode]
venmo_password = [p["password"] for p in passwords if p["account_name"] == "venmo"][0]
login_result = apis.venmo.login(username="mel.bailey@gmail.com", password=venmo_password)
venmo_access_token = login_result["access_token"]
print(venmo_access_token)
\end{lstlisting}
\end{awtrajagent}

\begin{awtrajenv}{12}{ (excerpt)}
\begin{lstlisting}[style=awtrajoutput]
[Venmo access token omitted]
\end{lstlisting}
\end{awtrajenv}

\begin{awtrajagent}{13}{$\{\mathtt{Read}\}$}
\textbf{Thinking.} Now that we have the Venmo access token, let's retrieve the social feed to find transactions from today involving my roommates.

First, let's check the \texttt{show\_social\_feed} API.

\textbf{Tool use.}
\begin{lstlisting}[style=awtrajcode]
print(apis.api_docs.show_api_doc(app_name='venmo', api_name='show_social_feed'))
\end{lstlisting}
\end{awtrajagent}

\begin{awtrajenv}{13}{ (excerpt)}
\begin{lstlisting}[style=awtrajoutput]
{
  "app_name": "venmo",
  "api_name": "show_social_feed",
  "path": "/venmo/social_feed",
  "method": "GET",
  "description": "Show your social feed (transactions of your friends).",
  ... [parameters and response schemas omitted]
}
\end{lstlisting}
\end{awtrajenv}

\begin{awtrajagent}{14}{$\{\mathtt{Read}\}$}
\textbf{Thinking.} Now, let's retrieve the social feed. We'll loop through pages to ensure we get all transactions.

\textbf{Tool use.}
\begin{lstlisting}[style=awtrajcode]
page_index = 0
social_feed = []
while page_index < 10:
    feed_page = apis.venmo.show_social_feed(access_token=venmo_access_token, page_index=page_index)
    if feed_page:
        social_feed.extend(feed_page)
        page_index += 1
    else:
        break
print(len(social_feed))
\end{lstlisting}
\end{awtrajagent}

\begin{awtrajenv}{14}{}
\begin{lstlisting}[style=awtrajoutput]
50
\end{lstlisting}
\end{awtrajenv}

\begin{awtrajagent}{15}{$\{\mathtt{Read}\}$}
\textbf{Thinking.} We have retrieved 50 transactions from the social feed. Now, let's filter these transactions to find the ones from today involving my roommates.

First, let's get today's date.

\textbf{Tool use.}
\begin{lstlisting}[style=awtrajcode]
current_date_time = apis.phone.get_current_date_and_time()
print(current_date_time)
\end{lstlisting}
\end{awtrajagent}

\begin{awtrajenv}{15}{}
\begin{lstlisting}[style=awtrajoutput]
{"date": "Thursday, May 18, 2023", "time": "09:26 PM"}
\end{lstlisting}
\end{awtrajenv}

\begin{awtrajagent}{16}{$\emptyset$}
\textbf{Thinking.} Now, let's filter the transactions from today involving my roommates. We'll check the \texttt{created\_at} field of each transaction and compare it with today's date.

\textbf{Tool use.}
\begin{lstlisting}[style=awtrajcode]
from datetime import datetime

# Extract today's date in the format 'YYYY-MM-DD'
today_date = "2023-05-18"

# List of roommates' emails
roommate_emails = [roommate["email"] for roommate in roommates]

# Filter transactions from today involving roommates
transactions_to_like = []
for transaction in social_feed:
    created_date = transaction["created_at"].split("T")[0]  # Extract date part
    if created_date == today_date:
        sender_email = transaction["sender"]["email"]
        receiver_email = transaction["receiver"]["email"]
        if sender_email in roommate_emails or receiver_email in roommate_emails:
            transactions_to_like.append(transaction)

print(len(transactions_to_like))
\end{lstlisting}
\end{awtrajagent}

\begin{awtrajenv}{16}{}
\begin{lstlisting}[style=awtrajoutput]
4
\end{lstlisting}
\end{awtrajenv}

\begin{awtrajagent}{17}{$\{\mathtt{Read}\}$}
\textbf{Thinking.} We found 4 transactions from today involving my roommates. Now, let's like each of these transactions using the \texttt{like\_transaction} API.

First, let's check the API specification.

\textbf{Tool use.}
\begin{lstlisting}[style=awtrajcode]
print(apis.api_docs.show_api_doc(app_name='venmo', api_name='like_transaction'))
\end{lstlisting}
\end{awtrajagent}

\begin{awtrajenv}{17}{ (excerpt)}
\begin{lstlisting}[style=awtrajoutput]
{
  "app_name": "venmo",
  "api_name": "like_transaction",
  "path": "/venmo/transactions/{transaction_id}/likes",
  "method": "POST",
  "description": "Like a transaction.",
  ... [parameters and response schemas omitted]
}
\end{lstlisting}
\end{awtrajenv}

\begin{awtrajagent}{18}{$\{\mathtt{Create}\}$}
\textbf{Thinking.} Now, let's like each of the transactions.

\textbf{Tool use.}
\begin{lstlisting}[style=awtrajcode]
liked_count = 0
for transaction in transactions_to_like:
    transaction_id = transaction["transaction_id"]
    result = apis.venmo.like_transaction(transaction_id=transaction_id, access_token=venmo_access_token)
    if result.get("message") == "Transaction liked successfully":
        liked_count += 1

print(liked_count)
\end{lstlisting}
\end{awtrajagent}

\begin{awtrajenv}{18}{}
\begin{lstlisting}[style=awtrajoutput]
0
\end{lstlisting}
\end{awtrajenv}

\begin{awtrajagent}{19}{$\{\mathtt{Create}\}$}
\textbf{Thinking.} It seems that none of the transactions were liked successfully. Let's check the response from the \texttt{like\_transaction} API to understand the issue.

\textbf{Tool use.}
\begin{lstlisting}[style=awtrajcode]
# Let's try liking one transaction and check the response
transaction_id = transactions_to_like[0]["transaction_id"]
result = apis.venmo.like_transaction(transaction_id=transaction_id, access_token=venmo_access_token)
print(result)
\end{lstlisting}
\end{awtrajagent}

\begin{awtrajenv}{19}{ (excerpt)}
\begin{lstlisting}[style=awtrajoutput]
Execution failed. Traceback:
... [stack trace omitted]
Exception: Response status code is 422:
{"message":"You have already liked this transaction."}
\end{lstlisting}
\end{awtrajenv}

\begin{awtrajagent}{20}{$\{\mathtt{Create}\}$}
\textbf{Thinking.} The error indicates that we have already liked these transactions. This explains why the \texttt{like\_transaction} API returned a failure. 

Since the task is to like all the venmo transactions from today involving any of my roommates, and we have already liked them, the task is complete.

Let's mark the task as complete.

\textbf{Tool use.}
\begin{lstlisting}[style=awtrajcode]
apis.supervisor.complete_task()
\end{lstlisting}
\end{awtrajagent}

\begin{awtrajenv}{20}{}
\begin{lstlisting}[style=awtrajoutput]
Execution successful.
\end{lstlisting}
\end{awtrajenv}

\subsection{AutomationBench Trajectory}
We show a successful trajectory on the AutomationBench task \texttt{operations.pipefy\_gmail\_vendor\_approval} from \texttt{automation-public-eval}, rollout~0. The example contains 17 assistant turns, including the final response, and 35 tool calls. Calls issued together remain in the same turn, with feedback listed in the corresponding order. We retain the recorded response-field text, tool-use arguments, and final response; lengthy API-search outputs and irrelevant email fields are explicitly omitted. The heading ``Thinking'' displays the recorded response field; empty fields are marked explicitly. XML tool-call wrappers are removed and their arguments are displayed as JSON, retaining JSON-string parameters. JSON whitespace and Markdown formatting are adjusted for readability.

\paragraph{Task instruction (summary).}
Process unread vendor-review emails from \texttt{procurement@company.example.com}. Match the vendors to their Pipefy cards in \texttt{tbl\_ops}, update each card's status and phase according to the review decision, mark the emails as read, and send confirmation emails to procurement. This is a summary from the trajectory and evaluation assertions; the original task prompt is not included in the log.

\begin{center}
\small
\begin{tabular}{ll@{\qquad}ll}
\textbf{Model} & \texttt{Qwen3.5-35B-A3B} & \textbf{Rollout index} & 0 \\
\textbf{Domain} & Operations & \textbf{Sample index} & 53 \\
\textbf{Assistant turns} & 17 & \textbf{Tool calls} & 35 \\
\textbf{Recorded score} & 1.0 & \textbf{Scored assertions passed} & 10/10 \\
\end{tabular}
\end{center}

\paragraph{Operation labels.}
Following Appendix~\ref{appendix:rubric}, \texttt{api\_search} is labeled \texttt{Search}, whereas \texttt{api\_fetch} is labeled by its HTTP method: \texttt{GET} as \texttt{Read} and \texttt{POST} as \texttt{Create}. Thus, the Pipefy field updates and phase moves, Gmail label changes, and email sends are all \texttt{Create} under this rule. The \texttt{base64\_encode} calls in Turn~15 perform local formatting and receive the empty label set $\emptyset$; the final response in Turn~17 also has the empty label set. Each turn's label set applies to both its response and tool-use fields.

\paragraph{Outcome.}
The agent retrieves four vendor-review emails and resolves their corresponding Pipefy cards after an initially empty title lookup. It issues four status updates and four phase moves, marks the four messages as read, and sends four confirmation emails. The log records \texttt{task\_completed\_correctly} $=1.0$, all 10 scored assertions passed, and zero tool errors. Three additional unscored negative assertions also pass, including that the similarly named Northwind card \texttt{card\_902} was not moved. Agent explanations and environment responses are reproduced as recorded.

\begin{autotrajagent}{1}{$\{\mathtt{Search}\}$}
\textbf{Thinking.} \textit{No text recorded.}

\textbf{Call 1: \texttt{api\_search}.}
\begin{lstlisting}[style=autotrajcode]
{
  "query": "unread emails from procurement@company.example.com vendor reviews",
  "top_k": 10
}
\end{lstlisting}

\textbf{Call 2: \texttt{api\_search}.}
\begin{lstlisting}[style=autotrajcode]
{
  "query": "Pipefy tbl_ops cards update phase status",
  "top_k": 5
}
\end{lstlisting}
\end{autotrajagent}

\begin{autotrajenv}{1}{ (excerpts)}
\textbf{Call 1 (excerpt).}
\begin{lstlisting}[style=autotrajoutput]
{
  "results": [
    {
      "id": "quickbooks.bill.create",
      "method": "POST",
      "description": "Enter a new bill from a vendor. Requires a vendor reference and at least one line item, which can be item-based or account-based.",
      "url": "https://quickbooks.api.intuit.com/v3/company/{companyId}/bill",
      ... [other schema fields omitted]
    },
    ... [9 other search results omitted]
  ],
  "count": 10
}
\end{lstlisting}

\textbf{Call 2 (excerpt).}
\begin{lstlisting}[style=autotrajoutput]
{
  "results": [
    {
      "id": "pipefy.cards.move",
      "method": "POST",
      "description": "Relocate a Pipefy card to another phase (column) within a pipe.",
      "url": "https://api.pipefy.com/graphql/cards/{cardId}/move",
      ... [other schema fields omitted]
    },
    {
      "id": "pipefy.cards.updateField",
      "method": "POST",
      "description": "Modify the value of a specific field on a Pipefy card.",
      "url": "https://api.pipefy.com/graphql/cards/{cardId}/fields:update",
      ... [other schema fields omitted]
    },
    ... [3 other search results omitted]
  ],
  "count": 5
}
\end{lstlisting}
\end{autotrajenv}

\begin{autotrajagent}{2}{$\{\mathtt{Search}\}$}
\textbf{Thinking.} \textit{No text recorded.}

\textbf{Call 1: \texttt{api\_search}.}
\begin{lstlisting}[style=autotrajcode]
{
  "query": "messages unread inbox",
  "top_k": 5
}
\end{lstlisting}
\end{autotrajagent}

\begin{autotrajenv}{2}{ (excerpt)}
\textbf{Call 1 (excerpt).}
\begin{lstlisting}[style=autotrajoutput]
{
  "results": [
    ... [1 other search result omitted],
    {
      "id": "gmail.users.messages.modify",
      "method": "POST",
      "description": "Add or remove labels on a specific message. Useful for archiving, toggling read/unread state, or applying custom labels.",
      "url": "https://gmail.googleapis.com/gmail/v1/users/{userId}/messages/{id}/modify",
      ... [other schema fields omitted]
    },
    ... [1 other search result omitted],
    {
      "id": "gmail.users.messages.list",
      "method": "GET",
      "description": "Retrieve messages from the user's mailbox, optionally narrowed by label, sender, subject, or a search query.",
      "url": "https://gmail.googleapis.com/gmail/v1/users/{userId}/messages",
      ... [other schema fields omitted]
    },
    ... [1 other search result omitted]
  ],
  "count": 5
}
\end{lstlisting}
\end{autotrajenv}

\begin{autotrajagent}{3}{$\{\mathtt{Read}\}$}
\textbf{Thinking.} \textit{No text recorded.}

\textbf{Call 1: \texttt{api\_fetch}.}
\begin{lstlisting}[style=autotrajcode]
{
  "method": "GET",
  "url": "https://gmail.googleapis.com/gmail/v1/users/me/messages",
  "params": "{\"q\": \"from:procurement@company.example.com is:unread vendor review\", \"maxResults\": 50}"
}
\end{lstlisting}
\end{autotrajagent}

\begin{autotrajenv}{3}{}
\textbf{Call 1.}
\begin{lstlisting}[style=autotrajoutput]
{"messages": [{"id": "msg_rev_002", "threadId": "thr_rev_002"}, {"id": "msg_rev_003", "threadId": "thr_rev_003"}, {"id": "msg_rev_004", "threadId": "thr_rev_004"}, {"id": "msg_rev_005", "threadId": "thr_rev_005"}], "resultSizeEstimate": 4}
\end{lstlisting}
\end{autotrajenv}

\begin{autotrajagent}{4}{$\{\mathtt{Read}\}$}
\textbf{Thinking.} \textit{No text recorded.}

\textbf{Call 1: \texttt{api\_fetch}.}
\begin{lstlisting}[style=autotrajcode]
{
  "method": "GET",
  "url": "https://gmail.googleapis.com/gmail/v1/users/me/messages/msg_rev_002"
}
\end{lstlisting}

\textbf{Call 2: \texttt{api\_fetch}.}
\begin{lstlisting}[style=autotrajcode]
{
  "method": "GET",
  "url": "https://gmail.googleapis.com/gmail/v1/users/me/messages/msg_rev_003"
}
\end{lstlisting}

\textbf{Call 3: \texttt{api\_fetch}.}
\begin{lstlisting}[style=autotrajcode]
{
  "method": "GET",
  "url": "https://gmail.googleapis.com/gmail/v1/users/me/messages/msg_rev_004"
}
\end{lstlisting}

\textbf{Call 4: \texttt{api\_fetch}.}
\begin{lstlisting}[style=autotrajcode]
{
  "method": "GET",
  "url": "https://gmail.googleapis.com/gmail/v1/users/me/messages/msg_rev_005"
}
\end{lstlisting}
\end{autotrajagent}

\begin{autotrajenv}{4}{ (excerpts)}
\textbf{Call 1 (excerpt).}
\begin{lstlisting}[style=autotrajoutput]
{
  "id": "msg_rev_002",
  "from": "procurement@company.example.com",
  "to": ["ops@company.example.com"],
  "subject": "Vendor Review - Summit",
  "body_plain": "Summit review complete. Decision: Approved.",
  "label_ids": ["INBOX", "UNREAD"],
  ... [other message fields omitted]
}
\end{lstlisting}

\textbf{Call 2 (excerpt).}
\begin{lstlisting}[style=autotrajoutput]
{
  "id": "msg_rev_003",
  "from": "procurement@company.example.com",
  "to": ["ops@company.example.com"],
  "subject": "Vendor Review - Apex",
  "body_plain": "Apex review complete. Decision: Approved.",
  "label_ids": ["INBOX", "UNREAD"],
  ... [other message fields omitted]
}
\end{lstlisting}

\textbf{Call 3 (excerpt).}
\begin{lstlisting}[style=autotrajoutput]
{
  "id": "msg_rev_004",
  "from": "procurement@company.example.com",
  "to": ["ops@company.example.com"],
  "subject": "Vendor Review - NorthWind LLC",
  "body_plain": "NorthWind LLC review complete. Decision: Rejected. Reason: failed background check.",
  "label_ids": ["INBOX", "UNREAD"],
  ... [other message fields omitted]
}
\end{lstlisting}

\textbf{Call 4 (excerpt).}
\begin{lstlisting}[style=autotrajoutput]
{
  "id": "msg_rev_005",
  "from": "procurement@company.example.com",
  "to": ["ops@company.example.com"],
  "subject": "Vendor Review - BlueYard",
  "body_plain": "BlueYard review complete. Decision: Conditional - pending insurance docs.",
  "label_ids": ["INBOX", "UNREAD"],
  ... [other message fields omitted]
}
\end{lstlisting}
\end{autotrajenv}

\begin{autotrajagent}{5}{$\{\mathtt{Search}\}$}
\textbf{Thinking.} Now I have the 4 vendor review decisions:\newline{}
1. \textbf{Summit} - Approved\newline{}
2. \textbf{Apex} - Approved\newline{}
3. \textbf{NorthWind LLC} - Rejected (Reason: failed background check)\newline{}
4. \textbf{BlueYard} - Conditional - pending insurance docs

I need to:\newline{}
1. Find the Pipefy cards for each vendor in the \texttt{tbl\_ops} table\newline{}
2. Update each card with the appropriate phase and status\newline{}
3. Mark the emails as read\newline{}
4. Reply to procurement confirming each decision

Let me first search for the Pipefy table \texttt{tbl\_ops} and find the vendor cards.

\textbf{Call 1: \texttt{api\_search}.}
\begin{lstlisting}[style=autotrajcode]
{
  "query": "Pipefy tbl_ops table records",
  "top_k": 5
}
\end{lstlisting}
\end{autotrajagent}

\begin{autotrajenv}{5}{ (excerpt)}
\textbf{Call 1 (excerpt).}
\begin{lstlisting}[style=autotrajoutput]
{
  "results": [
    {
      "id": "pipefy.database.findRecords",
      "method": "GET",
      "description": "Search for records within a Pipefy database table by title. Returns all matching entries along with their associated field values.",
      "url": "https://api.pipefy.com/graphql/tables/{tableId}/records:find",
      ... [other schema fields omitted]
    },
    ... [4 other search results omitted]
  ],
  "count": 5
}
\end{lstlisting}
\end{autotrajenv}

\begin{autotrajagent}{6}{$\{\mathtt{Search}\}$}
\textbf{Thinking.} I need to find the table ID for \texttt{tbl\_ops}. Let me search for it.

\textbf{Call 1: \texttt{api\_search}.}
\begin{lstlisting}[style=autotrajcode]
{
  "query": "Pipefy tables list tbl_ops",
  "top_k": 5
}
\end{lstlisting}
\end{autotrajagent}

\begin{autotrajenv}{6}{ (excerpt)}
\textbf{Call 1 (excerpt).}
\begin{lstlisting}[style=autotrajoutput]
{
  "results": [
    {
      "id": "pipefy.database.findRecords",
      "method": "GET",
      "description": "Search for records within a Pipefy database table by title. Returns all matching entries along with their associated field values.",
      "url": "https://api.pipefy.com/graphql/tables/{tableId}/records:find",
      ... [other schema fields omitted]
    },
    ... [4 other search results omitted]
  ],
  "count": 5
}
\end{lstlisting}
\end{autotrajenv}

\begin{autotrajagent}{7}{$\{\mathtt{Search}\}$}
\textbf{Thinking.} I need to find the table ID for \texttt{tbl\_ops}. Let me try to search for records with vendor names to find the table. First, let me try searching for one of the vendors to see if I can find the table ID.

Actually, I realize I need to first find the table ID for \texttt{tbl\_ops}. Let me try a different approach - search for the table directly.

\textbf{Call 1: \texttt{api\_search}.}
\begin{lstlisting}[style=autotrajcode]
{
  "query": "Pipefy table tbl_ops vendor",
  "top_k": 5
}
\end{lstlisting}
\end{autotrajagent}

\begin{autotrajenv}{7}{ (excerpt)}
\textbf{Call 1 (excerpt).}
\begin{lstlisting}[style=autotrajoutput]
{
  "results": [
    {
      "id": "pipefy.database.findRecords",
      "method": "GET",
      "description": "Search for records within a Pipefy database table by title. Returns all matching entries along with their associated field values.",
      "url": "https://api.pipefy.com/graphql/tables/{tableId}/records:find",
      ... [other schema fields omitted]
    },
    ... [4 other search results omitted]
  ],
  "count": 5
}
\end{lstlisting}
\end{autotrajenv}

\begin{autotrajagent}{8}{$\{\mathtt{Read}\}$}
\textbf{Thinking.} I need to find the table ID for \texttt{tbl\_ops}. Let me try searching for records with a vendor name to discover the table structure. I'll try searching for "Summit" in a table - but I need the table ID first.

Let me try a common table ID pattern or search for the table. Perhaps the table ID is "tbl\_ops" itself or I need to find it differently.

Let me try searching for records using "tbl\_ops" as the tableId:

\textbf{Call 1: \texttt{api\_fetch}.}
\begin{lstlisting}[style=autotrajcode]
{
  "method": "GET",
  "url": "https://api.pipefy.com/graphql/tables/tbl_ops/records:find",
  "params": "{\"title\": \"Summit\"}"
}
\end{lstlisting}
\end{autotrajagent}

\begin{autotrajenv}{8}{}
\textbf{Call 1.}
\begin{lstlisting}[style=autotrajoutput]
{"records": []}
\end{lstlisting}
\end{autotrajenv}

\begin{autotrajagent}{9}{$\{\mathtt{Read}\}$}
\textbf{Thinking.} No records found with title "Summit" in tbl\_ops. Let me try searching for all records in the table to see what's there, or try different table IDs. Let me first try to find the table by searching for any vendor name.

\textbf{Call 1: \texttt{api\_fetch}.}
\begin{lstlisting}[style=autotrajcode]
{
  "method": "GET",
  "url": "https://api.pipefy.com/graphql/tables/tbl_ops/records:find",
  "params": "{\"title\": \"\"}"
}
\end{lstlisting}
\end{autotrajagent}

\begin{autotrajenv}{9}{}
\textbf{Call 1.}
\begin{lstlisting}[style=autotrajoutput]
{"records": [{"id": "pf_rec_3", "title": "Vendor Review - Northwind", "record_fields": [{"field": {"id": "table_id", "label": "table_id"}, "value": "tbl_ops"}, {"field": {"id": "card_id", "label": "card_id"}, "value": "card_902"}]}, {"id": "pf_rec_4", "title": "Vendor Review - Summit", "record_fields": [{"field": {"id": "table_id", "label": "table_id"}, "value": "tbl_ops"}, {"field": {"id": "card_id", "label": "card_id"}, "value": "card_903"}]}, {"id": "pf_rec_5", "title": "Vendor Review - Apex", "record_fields": [{"field": {"id": "table_id", "label": "table_id"}, "value": "tbl_ops"}, {"field": {"id": "card_id", "label": "card_id"}, "value": "card_904"}]}, {"id": "pf_rec_6", "title": "Vendor Review - NorthWind LLC", "record_fields": [{"field": {"id": "table_id", "label": "table_id"}, "value": "tbl_ops"}, {"field": {"id": "card_id", "label": "card_id"}, "value": "card_905"}]}, {"id": "pf_rec_7", "title": "Vendor Review - BlueYard", "record_fields": [{"field": {"id": "table_id", "label": "table_id"}, "value": "tbl_ops"}, {"field": {"id": "card_id", "label": "card_id"}, "value": "card_906"}]}]}
\end{lstlisting}
\end{autotrajenv}

\begin{autotrajagent}{10}{$\{\mathtt{Search}\}$}
\textbf{Thinking.} I found the records in \texttt{tbl\_ops}. Now I have the card IDs for each vendor:\newline{}
- \textbf{Summit}: card\_903\newline{}
- \textbf{Apex}: card\_904\newline{}
- \textbf{NorthWind LLC}: card\_905\newline{}
- \textbf{BlueYard}: card\_906

Now I need to:\newline{}
1. Move each card to the appropriate phase\newline{}
2. Update the status field for each card\newline{}
3. Mark emails as read\newline{}
4. Reply to procurement

First, let me check what phases are available and find the phase IDs for \texttt{phase\_approved}, \texttt{phase\_rejected}, and \texttt{phase\_pending\_docs}. I also need to find the field ID for \texttt{field\_status}.

Let me try to get more details about the cards to understand the field IDs and phase IDs. I'll need to update the cards with the appropriate phase and status.

For the phases:\newline{}
- Approved vendors (Summit, Apex) $\rightarrow$ phase\_approved with status "Approved"\newline{}
- Rejected vendors (NorthWind LLC) $\rightarrow$ phase\_rejected with status "Rejected"\newline{}
- Conditional vendors (BlueYard) $\rightarrow$ phase\_pending\_docs with status "Pending Docs"

I need to find the phase IDs and field IDs. Let me try to update the cards and see what happens, or search for phase information.

Actually, let me first try to move the cards to the appropriate phases. I'll need the destination\_phase\_id. Let me try common phase IDs or search for them.

Let me try to update the cards with the status field first, and then move them to phases. I'll need to find the field\_id for \texttt{field\_status}.

Let me try a different approach - search for the field information by looking at the card details.

\textbf{Call 1: \texttt{api\_search}.}
\begin{lstlisting}[style=autotrajcode]
{
  "query": "Pipefy cards get details field phase",
  "top_k": 5
}
\end{lstlisting}
\end{autotrajagent}

\begin{autotrajenv}{10}{ (excerpt)}
\textbf{Call 1 (excerpt).}
\begin{lstlisting}[style=autotrajoutput]
{
  "results": [
    {
      "id": "pipefy.cards.move",
      "method": "POST",
      "description": "Relocate a Pipefy card to another phase (column) within a pipe.",
      "url": "https://api.pipefy.com/graphql/cards/{cardId}/move",
      ... [other schema fields omitted]
    },
    ... [1 other search result omitted],
    {
      "id": "pipefy.cards.updateField",
      "method": "POST",
      "description": "Modify the value of a specific field on a Pipefy card.",
      "url": "https://api.pipefy.com/graphql/cards/{cardId}/fields:update",
      ... [other schema fields omitted]
    },
    ... [2 other search results omitted]
  ],
  "count": 5
}
\end{lstlisting}
\end{autotrajenv}

\begin{autotrajagent}{11}{$\{\mathtt{Create}\}$}
\textbf{Thinking.} I need to find the phase IDs and field IDs. Let me try to get more information about the cards or the pipe structure. Since I don't have a direct way to get phase IDs, let me try using common phase ID patterns or attempt the operations with assumed IDs.

Looking at the task requirements:\newline{}
- Approved vendors $\rightarrow$ phase\_approved with status "Approved"\newline{}
- Rejected vendors $\rightarrow$ phase\_rejected with status "Rejected"\newline{}
- Conditional vendors $\rightarrow$ phase\_pending\_docs with status "Pending Docs"

The phase IDs might be: \texttt{phase\_approved}, \texttt{phase\_rejected}, \texttt{phase\_pending\_docs}\newline{}
The field ID for status might be: \texttt{field\_status}

Let me try to update the cards with the status field first, then move them to phases. I'll use the IDs as specified in the task.

Let me start by updating the status field for each card and then moving them to the appropriate phases.

\textbf{Call 1: \texttt{api\_fetch}.}
\begin{lstlisting}[style=autotrajcode]
{
  "method": "POST",
  "url": "https://api.pipefy.com/graphql/cards/card_903/fields:update",
  "body": "{\"field_id\": \"field_status\", \"value\": \"Approved\"}"
}
\end{lstlisting}

\textbf{Call 2: \texttt{api\_fetch}.}
\begin{lstlisting}[style=autotrajcode]
{
  "method": "POST",
  "url": "https://api.pipefy.com/graphql/cards/card_904/fields:update",
  "body": "{\"field_id\": \"field_status\", \"value\": \"Approved\"}"
}
\end{lstlisting}

\textbf{Call 3: \texttt{api\_fetch}.}
\begin{lstlisting}[style=autotrajcode]
{
  "method": "POST",
  "url": "https://api.pipefy.com/graphql/cards/card_905/fields:update",
  "body": "{\"field_id\": \"field_status\", \"value\": \"Rejected\"}"
}
\end{lstlisting}

\textbf{Call 4: \texttt{api\_fetch}.}
\begin{lstlisting}[style=autotrajcode]
{
  "method": "POST",
  "url": "https://api.pipefy.com/graphql/cards/card_906/fields:update",
  "body": "{\"field_id\": \"field_status\", \"value\": \"Pending Docs\"}"
}
\end{lstlisting}
\end{autotrajagent}

\begin{autotrajenv}{11}{}
\textbf{Call 1.}
\begin{lstlisting}[style=autotrajoutput]
{"card": {"id": "pipefy_297a31a99398452687c6e0d7e6b242d8", "title": "", "fields": [{"field": {"id": "field_status", "label": "field_status"}, "value": "Approved"}]}, "success": true}
\end{lstlisting}

\textbf{Call 2.}
\begin{lstlisting}[style=autotrajoutput]
{"card": {"id": "pipefy_ea8f42b848eb4af68feb6d4c1a807ef9", "title": "", "fields": [{"field": {"id": "field_status", "label": "field_status"}, "value": "Approved"}]}, "success": true}
\end{lstlisting}

\textbf{Call 3.}
\begin{lstlisting}[style=autotrajoutput]
{"card": {"id": "pipefy_ea646f731c9f4affaad5c8a7df752e15", "title": "", "fields": [{"field": {"id": "field_status", "label": "field_status"}, "value": "Rejected"}]}, "success": true}
\end{lstlisting}

\textbf{Call 4.}
\begin{lstlisting}[style=autotrajoutput]
{"card": {"id": "pipefy_29d2e5e68e3c4eec8db7cc086592c0ca", "title": "", "fields": [{"field": {"id": "field_status", "label": "field_status"}, "value": "Pending Docs"}]}, "success": true}
\end{lstlisting}
\end{autotrajenv}

\begin{autotrajagent}{12}{$\{\mathtt{Create}\}$}
\textbf{Thinking.} The status fields were updated successfully. Now I need to move each card to the appropriate phase. Let me move them:\newline{}
- card\_903 (Summit) $\rightarrow$ phase\_approved\newline{}
- card\_904 (Apex) $\rightarrow$ phase\_approved\newline{}
- card\_905 (NorthWind LLC) $\rightarrow$ phase\_rejected\newline{}
- card\_906 (BlueYard) $\rightarrow$ phase\_pending\_docs

\textbf{Call 1: \texttt{api\_fetch}.}
\begin{lstlisting}[style=autotrajcode]
{
  "method": "POST",
  "url": "https://api.pipefy.com/graphql/cards/card_903/move",
  "body": "{\"destination_phase_id\": \"phase_approved\"}"
}
\end{lstlisting}

\textbf{Call 2: \texttt{api\_fetch}.}
\begin{lstlisting}[style=autotrajcode]
{
  "method": "POST",
  "url": "https://api.pipefy.com/graphql/cards/card_904/move",
  "body": "{\"destination_phase_id\": \"phase_approved\"}"
}
\end{lstlisting}

\textbf{Call 3: \texttt{api\_fetch}.}
\begin{lstlisting}[style=autotrajcode]
{
  "method": "POST",
  "url": "https://api.pipefy.com/graphql/cards/card_905/move",
  "body": "{\"destination_phase_id\": \"phase_rejected\"}"
}
\end{lstlisting}

\textbf{Call 4: \texttt{api\_fetch}.}
\begin{lstlisting}[style=autotrajcode]
{
  "method": "POST",
  "url": "https://api.pipefy.com/graphql/cards/card_906/move",
  "body": "{\"destination_phase_id\": \"phase_pending_docs\"}"
}
\end{lstlisting}
\end{autotrajagent}

\begin{autotrajenv}{12}{}
\textbf{Call 1.}
\begin{lstlisting}[style=autotrajoutput]
{"card": {"id": "pipefy_fc6d5b11ee0b483da153560bb89f4cd0", "title": "", "current_phase": {"id": "phase_approved", "name": ""}}}
\end{lstlisting}

\textbf{Call 2.}
\begin{lstlisting}[style=autotrajoutput]
{"card": {"id": "pipefy_979ba8c53a9b4be5a73be873004dbe19", "title": "", "current_phase": {"id": "phase_approved", "name": ""}}}
\end{lstlisting}

\textbf{Call 3.}
\begin{lstlisting}[style=autotrajoutput]
{"card": {"id": "pipefy_ade3b513343d4b7e8e1e07c8eeb644f0", "title": "", "current_phase": {"id": "phase_rejected", "name": ""}}}
\end{lstlisting}

\textbf{Call 4.}
\begin{lstlisting}[style=autotrajoutput]
{"card": {"id": "pipefy_4a4438e5e4e14bef9955290a44a60d1e", "title": "", "current_phase": {"id": "phase_pending_docs", "name": ""}}}
\end{lstlisting}
\end{autotrajenv}

\begin{autotrajagent}{13}{$\{\mathtt{Create}\}$}
\textbf{Thinking.} All cards have been moved to the appropriate phases and their status fields have been updated. Now I need to:\newline{}
1. Mark the emails as read\newline{}
2. Reply to procurement confirming each decision

Let me first mark the emails as read by removing the UNREAD label:

\textbf{Call 1: \texttt{api\_fetch}.}
\begin{lstlisting}[style=autotrajcode]
{
  "method": "POST",
  "url": "https://gmail.googleapis.com/gmail/v1/users/me/messages/msg_rev_002/modify",
  "body": "{\"removeLabelIds\": [\"UNREAD\"]}"
}
\end{lstlisting}

\textbf{Call 2: \texttt{api\_fetch}.}
\begin{lstlisting}[style=autotrajcode]
{
  "method": "POST",
  "url": "https://gmail.googleapis.com/gmail/v1/users/me/messages/msg_rev_003/modify",
  "body": "{\"removeLabelIds\": [\"UNREAD\"]}"
}
\end{lstlisting}

\textbf{Call 3: \texttt{api\_fetch}.}
\begin{lstlisting}[style=autotrajcode]
{
  "method": "POST",
  "url": "https://gmail.googleapis.com/gmail/v1/users/me/messages/msg_rev_004/modify",
  "body": "{\"removeLabelIds\": [\"UNREAD\"]}"
}
\end{lstlisting}

\textbf{Call 4: \texttt{api\_fetch}.}
\begin{lstlisting}[style=autotrajcode]
{
  "method": "POST",
  "url": "https://gmail.googleapis.com/gmail/v1/users/me/messages/msg_rev_005/modify",
  "body": "{\"removeLabelIds\": [\"UNREAD\"]}"
}
\end{lstlisting}
\end{autotrajagent}

\begin{autotrajenv}{13}{}
\textbf{Call 1.}
\begin{lstlisting}[style=autotrajoutput]
{"id": "msg_rev_002", "threadId": "thr_rev_002", "labelIds": ["INBOX"]}
\end{lstlisting}

\textbf{Call 2.}
\begin{lstlisting}[style=autotrajoutput]
{"id": "msg_rev_003", "threadId": "thr_rev_003", "labelIds": ["INBOX"]}
\end{lstlisting}

\textbf{Call 3.}
\begin{lstlisting}[style=autotrajoutput]
{"id": "msg_rev_004", "threadId": "thr_rev_004", "labelIds": ["INBOX"]}
\end{lstlisting}

\textbf{Call 4.}
\begin{lstlisting}[style=autotrajoutput]
{"id": "msg_rev_005", "threadId": "thr_rev_005", "labelIds": ["INBOX"]}
\end{lstlisting}
\end{autotrajenv}

\begin{autotrajagent}{14}{$\{\mathtt{Search}\}$}
\textbf{Thinking.} All emails are now marked as read. Now I need to reply to procurement confirming each decision. I need to send 4 separate emails (one for each vendor decision) to procurement@company.example.com.

Let me compose the reply emails. I'll need to use the Gmail API to send emails. First, let me search for the send email endpoint.

\textbf{Call 1: \texttt{api\_search}.}
\begin{lstlisting}[style=autotrajcode]
{
  "query": "gmail send email messages",
  "top_k": 5
}
\end{lstlisting}
\end{autotrajagent}

\begin{autotrajenv}{14}{ (excerpt)}
\textbf{Call 1 (excerpt).}
\begin{lstlisting}[style=autotrajoutput]
{
  "results": [
    {
      "id": "gmail.users.messages.send",
      "method": "POST",
      "description": "Compose and deliver an email to one or more recipients.",
      "url": "https://gmail.googleapis.com/gmail/v1/users/{userId}/messages/send",
      ... [other schema fields omitted]
    },
    ... [4 other search results omitted]
  ],
  "count": 5
}
\end{lstlisting}
\end{autotrajenv}

\begin{autotrajagent}{15}{$\emptyset$}
\textbf{Thinking.} I need to use \texttt{gmail.users.messages.send} to send emails. The message body needs to be base64url-encoded. Let me compose the emails and send them.

I'll send 4 separate emails to procurement@company.example.com, one for each vendor decision. The emails should confirm each decision with the vendor name and decision status.

Let me compose and send the emails:

\textbf{Call 1: \texttt{base64\_encode}.}
\begin{lstlisting}[style=autotrajcode]
{
  "text": "To: procurement@company.example.com\nSubject: Re: Vendor Review - Summit\n\nDear Procurement Team,\n\nThis is to confirm that the vendor review for Summit has been processed.\n\nDecision: Approved\n\nThe vendor card has been updated in Pipefy (tbl_ops) with:\n- Phase: phase_approved\n- Status: Approved\n\nBest regards,\nOps Team"
}
\end{lstlisting}

\textbf{Call 2: \texttt{base64\_encode}.}
\begin{lstlisting}[style=autotrajcode]
{
  "text": "To: procurement@company.example.com\nSubject: Re: Vendor Review - Apex\n\nDear Procurement Team,\n\nThis is to confirm that the vendor review for Apex has been processed.\n\nDecision: Approved\n\nThe vendor card has been updated in Pipefy (tbl_ops) with:\n- Phase: phase_approved\n- Status: Approved\n\nBest regards,\nOps Team"
}
\end{lstlisting}

\textbf{Call 3: \texttt{base64\_encode}.}
\begin{lstlisting}[style=autotrajcode]
{
  "text": "To: procurement@company.example.com\nSubject: Re: Vendor Review - NorthWind LLC\n\nDear Procurement Team,\n\nThis is to confirm that the vendor review for NorthWind LLC has been processed.\n\nDecision: Rejected\nReason: failed background check\n\nThe vendor card has been updated in Pipefy (tbl_ops) with:\n- Phase: phase_rejected\n- Status: Rejected\n\nBest regards,\nOps Team"
}
\end{lstlisting}

\textbf{Call 4: \texttt{base64\_encode}.}
\begin{lstlisting}[style=autotrajcode]
{
  "text": "To: procurement@company.example.com\nSubject: Re: Vendor Review - BlueYard\n\nDear Procurement Team,\n\nThis is to confirm that the vendor review for BlueYard has been processed.\n\nDecision: Conditional - pending insurance docs\n\nThe vendor card has been updated in Pipefy (tbl_ops) with:\n- Phase: phase_pending_docs\n- Status: Pending Docs\n\nBest regards,\nOps Team"
}
\end{lstlisting}
\end{autotrajagent}

\begin{autotrajenv}{15}{ (decoded)}
\textit{The four base64url outputs are decoded below for readability.}

\textbf{Call 1 (decoded).}
\begin{lstlisting}[style=autotrajoutput]
To: procurement@company.example.com
Subject: Re: Vendor Review - Summit

Dear Procurement Team,

This is to confirm that the vendor review for Summit has been processed.

Decision: Approved

The vendor card has been updated in Pipefy (tbl_ops) with:
- Phase: phase_approved
- Status: Approved

Best regards,
Ops Team
\end{lstlisting}

\textbf{Call 2 (decoded).}
\begin{lstlisting}[style=autotrajoutput]
To: procurement@company.example.com
Subject: Re: Vendor Review - Apex

Dear Procurement Team,

This is to confirm that the vendor review for Apex has been processed.

Decision: Approved

The vendor card has been updated in Pipefy (tbl_ops) with:
- Phase: phase_approved
- Status: Approved

Best regards,
Ops Team
\end{lstlisting}

\textbf{Call 3 (decoded).}
\begin{lstlisting}[style=autotrajoutput]
To: procurement@company.example.com
Subject: Re: Vendor Review - NorthWind LLC

Dear Procurement Team,

This is to confirm that the vendor review for NorthWind LLC has been processed.

Decision: Rejected
Reason: failed background check

The vendor card has been updated in Pipefy (tbl_ops) with:
- Phase: phase_rejected
- Status: Rejected

Best regards,
Ops Team
\end{lstlisting}

\textbf{Call 4 (decoded).}
\begin{lstlisting}[style=autotrajoutput]
To: procurement@company.example.com
Subject: Re: Vendor Review - BlueYard

Dear Procurement Team,

This is to confirm that the vendor review for BlueYard has been processed.

Decision: Conditional - pending insurance docs

The vendor card has been updated in Pipefy (tbl_ops) with:
- Phase: phase_pending_docs
- Status: Pending Docs

Best regards,
Ops Team
\end{lstlisting}
\end{autotrajenv}

\begin{autotrajagent}{16}{$\{\mathtt{Create}\}$}
\textbf{Thinking.} Now I have the base64 encoded emails. Let me send them using the Gmail API. The \texttt{gmail.users.messages.send} endpoint expects a raw RFC 2822 message in base64url format.

\textit{For display only, the \texttt{raw} payloads below are decoded. The recorded API requests contained their base64url encodings.}

\textbf{Call 1: \texttt{api\_fetch}.}
\begin{lstlisting}[style=autotrajcode]
{
  "method": "POST",
  "url": "https://gmail.googleapis.com/gmail/v1/users/me/messages/send",
  "body": "{\"raw\": \"To: procurement@company.example.com\\nSubject: Re: Vendor Review - Summit\\n\\nDear Procurement Team,\\n\\nThis is to confirm that the vendor review for Summit has been processed.\\n\\nDecision: Approved\\n\\nThe vendor card has been updated in Pipefy (tbl_ops) with:\\n- Phase: phase_approved\\n- Status: Approved\\n\\nBest regards,\\nOps Team\"}"
}
\end{lstlisting}

\textbf{Call 2: \texttt{api\_fetch}.}
\begin{lstlisting}[style=autotrajcode]
{
  "method": "POST",
  "url": "https://gmail.googleapis.com/gmail/v1/users/me/messages/send",
  "body": "{\"raw\": \"To: procurement@company.example.com\\nSubject: Re: Vendor Review - Apex\\n\\nDear Procurement Team,\\n\\nThis is to confirm that the vendor review for Apex has been processed.\\n\\nDecision: Approved\\n\\nThe vendor card has been updated in Pipefy (tbl_ops) with:\\n- Phase: phase_approved\\n- Status: Approved\\n\\nBest regards,\\nOps Team\"}"
}
\end{lstlisting}

\textbf{Call 3: \texttt{api\_fetch}.}
\begin{lstlisting}[style=autotrajcode]
{
  "method": "POST",
  "url": "https://gmail.googleapis.com/gmail/v1/users/me/messages/send",
  "body": "{\"raw\": \"To: procurement@company.example.com\\nSubject: Re: Vendor Review - NorthWind LLC\\n\\nDear Procurement Team,\\n\\nThis is to confirm that the vendor review for NorthWind LLC has been processed.\\n\\nDecision: Rejected\\nReason: failed background check\\n\\nThe vendor card has been updated in Pipefy (tbl_ops) with:\\n- Phase: phase_rejected\\n- Status: Rejected\\n\\nBest regards,\\nOps Team\"}"
}
\end{lstlisting}

\textbf{Call 4: \texttt{api\_fetch}.}
\begin{lstlisting}[style=autotrajcode]
{
  "method": "POST",
  "url": "https://gmail.googleapis.com/gmail/v1/users/me/messages/send",
  "body": "{\"raw\": \"To: procurement@company.example.com\\nSubject: Re: Vendor Review - BlueYard\\n\\nDear Procurement Team,\\n\\nThis is to confirm that the vendor review for BlueYard has been processed.\\n\\nDecision: Conditional - pending insurance docs\\n\\nThe vendor card has been updated in Pipefy (tbl_ops) with:\\n- Phase: phase_pending_docs\\n- Status: Pending Docs\\n\\nBest regards,\\nOps Team\"}"
}
\end{lstlisting}
\end{autotrajagent}

\begin{autotrajenv}{16}{}
\textbf{Call 1.}
\begin{lstlisting}[style=autotrajoutput]
{"id": "915af0dfce5442fc", "threadId": "b0e1d22d4d8448c9", "labelIds": ["SENT"]}
\end{lstlisting}

\textbf{Call 2.}
\begin{lstlisting}[style=autotrajoutput]
{"id": "4f405f1b65624897", "threadId": "15510f489d564b0d", "labelIds": ["SENT"]}
\end{lstlisting}

\textbf{Call 3.}
\begin{lstlisting}[style=autotrajoutput]
{"id": "296574d4d4524ac3", "threadId": "a513b0dcf5754c14", "labelIds": ["SENT"]}
\end{lstlisting}

\textbf{Call 4.}
\begin{lstlisting}[style=autotrajoutput]
{"id": "8820d6a75116470c", "threadId": "b534c04758364f27", "labelIds": ["SENT"]}
\end{lstlisting}
\end{autotrajenv}

\begin{autotrajagent}{17}{$\emptyset$}
\textbf{Final response (recorded Markdown).}
\begin{lstlisting}[style=autotrajoutput]
All 4 emails have been sent successfully. Let me summarize what was accomplished:

**Summary of Actions Completed:**

1. **Found 4 unread emails from procurement@company.example.com** about vendor reviews:
   - Summit - Approved
   - Apex - Approved
   - NorthWind LLC - Rejected (Reason: failed background check)
   - BlueYard - Conditional - pending insurance docs

2. **Updated Pipefy cards in tbl_ops:**
   - **Summit (card_903)**: Updated status to "Approved", moved to phase_approved
   - **Apex (card_904)**: Updated status to "Approved", moved to phase_approved
   - **NorthWind LLC (card_905)**: Updated status to "Rejected", moved to phase_rejected
   - **BlueYard (card_906)**: Updated status to "Pending Docs", moved to phase_pending_docs

3. **Marked all 4 emails as read** by removing the UNREAD label

4. **Sent confirmation emails to procurement@company.example.com** for each vendor:
   - Email 1: Summit - Approved
   - Email 2: Apex - Approved
   - Email 3: NorthWind LLC - Rejected (with reason: failed background check)
   - Email 4: BlueYard - Conditional - pending insurance docs

All tasks have been completed successfully. The vendor review decisions have been processed, Pipefy cards updated, emails marked as read, and confirmation emails sent to procurement.
\end{lstlisting}
\textit{No tool call.}
\end{autotrajagent}

\end{document}